\documentclass[11pt,a4paper]{amsart}
\usepackage{comment,listings,framed,graphicx,url,xcolor,enumerate, hyperref, amsmath, float, multicol, multirow}
\usepackage{fullpage} 
\usepackage{todonotes}
\definecolor{mild}{rgb}{1,0.98,0.8}
\usepackage[utf8]{inputenc}    
\usepackage[T1]{fontenc}       
\usepackage{lmodern}           
\usepackage[margin=.9in]{geometry}
\usepackage{graphicx}          
\usepackage{amsmath,amssymb}   
\usepackage{booktabs}          
\usepackage{caption}           
\usepackage{natbib}
\usepackage{subcaption}        
\usepackage{microtype}         
\usepackage{hyperref}          
\usepackage{natbib}            
\usepackage{xcolor}            
\usepackage{amsthm}
\usepackage{amsmath}

\allowdisplaybreaks[4]
\usepackage{mathtools}
\mathtoolsset{showonlyrefs}

\numberwithin{equation}{section}
\numberwithin{figure}{section}
\numberwithin{table}{section}

\newtheorem{theorem}{Theorem}[section]   
\newtheorem{lemma}[theorem]{Lemma}       
\newtheorem{corollary}[theorem]{Corollary} 
\theoremstyle{definition}
\newtheorem{definition}[theorem]{Definition}
\newtheorem{remark}[theorem]{Remark}

\hypersetup{
  colorlinks=true,
  linkcolor=blue,
  citecolor=black,
  urlcolor=blue
}

\DeclareMathOperator{\re}{Re}
\DeclareMathOperator{\im}{Im}

\title{Geometry-Conditioned Fourier Neural Operators for Cubic Nonlinear Schr\"odinger Dynamics on Periodic Domains}
\author[Oguadimma, Obieke, and Yu]{Emmanuel E. Oguadimma,\,\, Victory C. Obieke,\,\, and \, Xueying Yu}

\address{Emmanuel E. Oguadimma
\newline \indent Department of Mathematics, Oregon State University\indent 
\newline \indent  Kidder Hall 368
Corvallis, OR 97331 \indent 
}
\email{oguadime@oregonstate.edu}

\address{Victory C. Obieke
\newline \indent Department of Mathematics, Oregon State University\indent 
\newline \indent  Kidder Hall 368
Corvallis, OR 97331 \indent 
}
\email{obiekev@oregonstate.edu}

\address{Xueying  Yu
\newline \indent Department of Mathematics, Oregon State University\indent 
\newline \indent  Kidder Hall 368
Corvallis, OR 97331 \indent 
}
\email{xueying.yu@oregonstate.edu}

\begin{document}

\keywords{Weak Turbulence, NLS, Operator Learning, SCiML}

\begin{abstract}
We consider the cubic nonlinear Schr\"odinger (NLS) equation on two-dimensional flat tori with varying aspect ratios. In this formulation, the choice of aspect ratio governs the Fourier resonance structure, so rational and irrational geometries can exhibit different high-frequency cascade behaviors. We present a geometry-conditioned Fourier neural operator (GC-FNO) for the cubic defocusing NLS equation, where the input consists of the real and imaginary parts of the solution together with the aspect-ratio parameter \(\omega^2\). The model is trained to approximate the one-step solution operator and is evaluated on unseen trajectories generated from random-phase initial data using Fourier pseudospectral method. Our numerical experiments show that the learned operator captures the main solution dynamics on both tori and reproduces the distinct Sobolev norm behavior of the two geometries, with stronger \(H^2\)-growth on the rational torus and more constrained behavior on the irrational torus, consistent with the findings of \cite{hrabski2021energy}. We perform ablation studies to examine the roles of retained Fourier modes, activation functions, Fourier-layer depth, and explicit geometry conditioning. The results indicate that including $\omega^2$ improves long-time predictive accuracy, especially for the rational geometry, and supports the use of geometry-aware neural operators for learning spectral-transfer phenomena in nonlinear dispersive partial differential equations.

\end{abstract}

\maketitle

\setcounter{tocdepth}{1}
\tableofcontents

\parindent = 10pt     
\parskip = 8pt

\section{Introduction}\label{sec:intro}

The nonlinear Schr\"odinger (NLS) equation is a central dispersive partial differential equation (PDE) in mathematical physics. It arises in nonlinear optics and plasma physics, including the self-focusing and collapse of intense beams in Kerr media \citep{NewellMoloney1992,SulemSulem1999}, in the modulation of deep-water waves \citep{Zakharov1968}, and in the Gross--Pitaevskii description of Bose--Einstein condensates at ultra-low temperatures \citep{bao2007nonlinear}. Beyond its role as a canonical nonlinear wave equation, the cubic NLS also serves as a fundamental model for studying long-time energy redistribution across Fourier modes in various geometries. In recent years, this problem has been investigated under the broader perspective of weak turbulence theory, where one seeks to understand how energy migrates from low to high frequencies \cite{Bourgain1996,hrabski2021energy}. In this formulation, the geometry of the underlying domain directly shapes the resonance structure and therefore the efficiency of the cascade.

Recent developments on energy transfer for the cubic defocusing NLS on irrational tori \cite{hrabski2021energy}, have analytically shown that irrational aspect ratios substantially weaken the transfer of energy to high frequencies compared with the rational case. The analysis proves improved upper bounds on Sobolev norm growth for generic irrational tori, establishes barriers to the propagation of Fourier support for initially frequency-localized data, and studies the associated quasi-resonant system. Just as importantly for applications, the numerical study demonstrates that the energy cascade is consistently slower on irrational tori than on rational tori, and explains this difference through the geometry of quasi-resonant sets. This has a concrete implication beyond pure analysis. It places a serious caveat on periodic-domain simulations used in wave turbulence, because two simulations with different domain aspect ratios may not represent the same small-scale cascade physics, even when the resolution is very high. In other words, the interaction between dispersion relation and domain geometry must be accounted for when periodic-box computations are used as surrogates for homogeneous turbulence on large or effectively unbounded domains.

Classical numerical methods are standard approaches for solving the NLS equation and its variants. Split-step Fourier, time-splitting spectral, and pseudo-spectral techniques can provide high accuracy and are especially effective for dispersive wave propagation \cite{weideman1986splitstep,bao2003timesplitting}. However, repeated simulations over many initial conditions, parameter values, or geometries can become computationally expensive, and their accuracy can depend on the structure of the time-integration method and its treatment of symmetries and invariants \cite{oguadimma2026foundational}. This limitation has motivated growing interest in scientific machine learning methods that learn the behavior of the solution directly from data. In general, one can separate deep-learning approaches to PDEs into two classes. One class includes instance-based solvers such as physics-informed neural networks (PINNs) \cite{Raissi2019PINNs} and related methods, including the Deep Ritz method \cite{Yu2018DeepRitz}, the Deep Galerkin Method \cite{SirignanoSpiliopoulos2018}, weak adversarial networks \cite{ZangBaoYeZhou2020},  and domain-decomposition extensions such as cPINNs \cite{JagtapKharazmiKarniadakis2020} and XPINNs \cite{JagtapKarniadakis2020}, which are often accurate but typically require retraining when conditions such as parameters change. Structure-preserving learning approaches have also shown that the incorporation of geometric and physical constraints can improve long-time fidelity and reduce drift in learned dynamical systems \cite{zhong2020symoden,obieke2025structure, jin2020sympnets, cranmer2020lagrangian, obieke2026structure, greydanus2019hamiltonian, hernandez2021structure}. The other class includes operator-learning approaches, which aim to learn mappings between infinite-dimensional function spaces and therefore can generalize over families of PDE instances \cite{Lu2021DeepONet,Li2021FNO,Kovachki2021FNO}. Among these approaches, the Fourier neural operator (FNO) has emerged as a particularly influential architecture because it parameterizes integral kernels in Fourier space, efficiently captures multiscale structure, and is well suited for PDEs whose dynamics are naturally organized by frequency interactions \cite{Li2021FNO,Kovachki2021FNO}. Its frequency-space formulation has supported applications in fluid--structure interaction \cite{xiao2024fourier}, multiphase flow \cite{wen2022u}, and heterogeneous material modeling \cite{You2022IFNO}.

A burgeoning body of work has shown that the FNO algorithm can effectively capture the dynamics of nonlinear waves governed by Schr\"odinger equations. In particular, it has been used to investigate parametric soliton-state transitions in the NLS, Hirota, and $\mathcal{PT}$-symmetric NLS equations, showing that a single trained network can learn families of complex nonlinear waves across parameter ranges \cite{ZhongYanTian2023}. It has also been extended to coupled NLS systems, where the interaction between multiple components, multiple physical effects, and parameter-dependent localized waves makes the dynamics even more intricate \cite{RenTian2026}. The present work is based on two observations. On the one hand, the irrational-torus NLS problem is scientifically rich, since recent mathematical analysis have revealed clear and practically significant differences between rational and irrational geometries \cite{hrabski2021energy, deng2019growth, planchon2017growth}. On the other hand, the literature suggest that operator-learning architectures are especially effective when the essential dynamics are encoded in frequency-space interactions and one seeks to learn an entire family of solutions \cite{Li2021FNO,ZhongYanTian2023,RenTian2026}. 

In this paper, we propose a geometry-conditioned Fourier neural operator (GC-FNO) formulation for the two-dimensional cubic defocusing NLS on rational and irrational tori. In contrast to previous studies that focus primarily on parametric families of explicit soliton or rogue-wave solutions \cite{ZhongYanTian2023, RenTian2026}, our focus is on how the torus aspect ratio changes the resonance structure of the Fourier lattice and thereby alters the observed energy-transfer dynamics. Our formulation follows the time-marching perspective of prior FNO work \cite{Li2021FNO}, but incorporates the aspect-ratio parameter of the torus as an additional input channel so that a single learned operator can distinguish between the rational and irrational geometries. The model learns a one-step solution operator from the current complex-valued state to the next state, and long-time predictions are obtained autoregressively by feeding each prediction back into the network. This setup lends itself naturally to the present problem, as it enables the learned operator to propagate geometry-dependent dynamics over long time intervals. Our numerical experiments follow the scientific framework of \cite{hrabski2021energy}. We consider random-phase initial data with compact Fourier support and compare the dynamics on the rational torus and the irrational torus, and generate reference solutions using the Fourier pseudo-spectral method with fourth-order Runge-Kutta time stepping.

What remains of the paper is organized as follows. Section~\ref{sec: theory} presents the theoretical background and introduces the NLS model on rational and irrational tori. Section~\ref{sec: oplearn} describes the operator-learning framework considered in this work, which includes the Fourier Neural Operator. Section~\ref{sec: experiment} presents numerical experiments, including data generation, computational setup, training, evaluation and ablation study.

\subsection*{Acknowledgment}
We would like to thank Alexander Hrabski for helpful discussions. X.Y. is partially supported by NSF DMS-2306429.

\section{Preliminaries}\label{sec: theory}
In this section, we introduce the two-dimensional cubic nonlinear Schr\"odinger equation on flat tori, define the rational and irrational geometries considered here, and establish the notation used throughout. We also briefly recall the analytical results most relevant to our numerical study. 
For further details, including rigorous statements and proofs, we refer the reader to \cite{hrabski2021energy,staffilani2020stability,colliander2010transfer, giuliani2022sobolev}.

\subsection{Fourier Transforms on Tori}

In \cite{hrabski2021energy}, the authors consider the scaled torus
\begin{align}\label{def:scaled-torus}
\mathbb{T}^2_{\underline{\omega}} =\mathbb{R}/\omega_1\mathbb{Z}\times \mathbb{R}/\omega_2\mathbb{Z}, \quad \underline{\omega}=(\omega_1,\omega_2)\in\mathbb{R}_+^2,
\end{align}
where $\omega_1$ and $\omega_2$ denote the periods in the two spatial directions and hence determine the geometry. The domain is called \emph{rational} if $\omega_1^2/\omega_2^2 \in\mathbb{Q}$ and \emph{irrational} otherwise; the two cases correspond to commensurate and incommensurate squared side lengths. Without loss of generality one reduces to $\underline{\omega}=(1,\omega)$, so that
\begin{equation}\label{def:torus}
\mathbb{T}^2_{\underline{\omega}}:=\mathbb{R}/\mathbb{Z}\times\mathbb{R}/\omega\mathbb{Z},
\end{equation}
and $\mathbb{T}^2_{\underline{\omega}}$ is \emph{irrational} if $\omega^2\notin\mathbb{Q}$ and \emph{rational} otherwise.

To best accommodate our simulations, we work with an equivalent $2\pi$-periodic formulation in which the geometry enters through an anisotropic Laplacian rather than through the domain. We pose the problem on the square torus
\begin{align}\label{def:square-torus}
\mathbb{T}^2=\mathbb{R}/2\pi\mathbb{Z}\times\mathbb{R}/2\pi\mathbb{Z},
\end{align}
and replace the standard Laplacian by the anisotropic operator
\begin{align}\label{def:aniso-laplacian}
\Delta_{\omega}:=\partial_x^2+\omega^2\partial_y^2,
\end{align}
writing $\Delta$ for $\Delta_\omega$ when the context is clear.

For every $f\in L^2(\mathbb{T}^2)$, we write its Fourier expansion
\begin{equation}\label{def:fourier-series}
    f(x,y)
    =
    \sum_{k=(m,\ell)\in\mathbb{Z}^2}
    \widehat{f}_k\,
    e^{ i(mx + \ell y)},
\end{equation}
with Fourier coefficients
\begin{equation}\label{def:fourier-coeff}
    \widehat{f}_k
    :=
    \frac{1}{(2\pi)^2}
    \int_{\mathbb{T}^2}
    f(x,y)\,e^{-i(mx+\ell y)}\,dx\,dy.
    \qquad
    k=(m,\ell)\in\mathbb{Z}^2.
\end{equation}
Parseval's identity reads
\begin{equation}\label{parseval}
\|f\|_{L^2(\mathbb{T}^2)}^2=(2\pi)^2\sum_{k\in\mathbb{Z}^2}|\widehat{f}_k|^2.
\end{equation}

With this basis the anisotropic Laplacian acts diagonally,
\begin{equation}\label{def:laplacian-eigenvalue}
\widehat{(\Delta_\omega f)}_k=-\lambda_k\,\widehat{f}_k,
\qquad
\lambda_k:=m^2+\omega^2\ell^2,\quad k=(m,\ell)\in\mathbb{Z}^2,
\end{equation}
so that all geometric information is carried by the eigenvalues $\lambda_k$. This formulation is equivalent to \eqref{def:torus} after rescaling the $y$-variable, and it has the advantage that the rational and irrational cases share the same $2\pi$-periodic grid, differing only through the value of $\omega$ in $\lambda_k$.

\subsection{Function Spaces}

\begin{definition}\label{def:sobolevH}
For $s \in [0,\infty)$, define the Sobolev space $H^s(\mathbb{T}^2)$ as the closure 
of smooth functions $f:\mathbb{T}^2\to\mathbb{C}$ with Fourier expansion
\begin{align}
f(x,y)=\sum_{k=(m,\ell)\in\mathbb{Z}^2}\widehat{f}_k\,e^{i(mx+\ell y)}
\end{align}
under the norm
\begin{align}
    \|f\|_{H^s(\mathbb{T}^2)} 
    := 
    \left(\sum_{k=(m,\ell)\in\mathbb{Z}^2} \langle k\rangle^{2s}|\widehat{f}_k|^2
    \right)^{1/2},
\end{align}
where $\langle k\rangle := (1 + |m|^2 + |\ell|^2)^{1/2}$. 
\end{definition}

\begin{definition}\label{def:sobolev}
For $x=\{x_n\}_{n\in\mathbb{Z}^2}$ and $s\in[0,\infty)$, define the
\emph{Sobolev norm}
\begin{align}
    \|x\|_{h^s}
    :=
    \sqrt{\sum_{n\in\mathbb{Z}^2}|x_n|^2\langle n\rangle^{2s}},
    \qquad
    \langle n\rangle := \sqrt{|n|^2+1},
\end{align}
and the \emph{sequence Sobolev space}
\begin{align}
    h^s
    :=
    h^s(\mathbb{Z}^2)
    :=
    \bigl\{
        x = \{x_n\}_{n\in\mathbb{Z}^2}
        :
        \|x\|_s < \infty
    \bigr\}.
\end{align}
In particular, $h^0(\mathbb{Z}^2) = \ell^2(\mathbb{Z}^2)$.
\end{definition}

In fact, thanks to the Fourier expansion \eqref{def:fourier-series} and Parseval's identity \eqref{parseval}, the function space $H^s(\mathbb{T}^2)$ is identified with $h^s(\mathbb{Z}^2)$ through the isometric isomorphism $f \mapsto \{\widehat{f}_k\}_{k\in\mathbb{Z}^2}$, and we write $\|f\|_{H^s} := \|\{\widehat{f}_k\}\|_{h^s}$.

\subsection{Littlewood-Paley projections}
Next we define the Littlewood-Paley projectors. Fix an even, non-increasing 
function $\eta\in C_c^\infty(\mathbb{R};[0,1])$ with
\begin{align}
    \eta(t)=1 \ \text{ for } |t|\le 1,
    \qquad
    \eta(t)=0 \ \text{ for } |t|\ge 2.
\end{align}
For a dyadic integer $N\in 2^{\mathbb{N}_0}=\{1,2,4,8,\dots\}$, define the symbols 
$\eta_N:\mathbb{Z}^2\to[0,1]$ by
\begin{align}
    \eta_1(\xi):=\eta(|\xi|),
    \qquad
    \eta_N(\xi):=\eta\!\left(\frac{|\xi|}{N}\right)-\eta\!\left(\frac{2|\xi|}{N}\right)
    \ \ \text{for } N\ge 2.
\end{align}
By construction these form a partition of unity on $\mathbb{Z}^2$,
\begin{align}
    \sum_{N\in 2^{\mathbb{N}_0}}\eta_N(\xi)=1
    \qquad\text{for every }\xi\in\mathbb{Z}^2,
\end{align}
with $\eta_N$ supported in the annulus $\{N/2\le|\xi|\le 2N\}$ for $N\ge 2$ and in the 
ball $\{|\xi|\le 2\}$ for $N=1$.

The \emph{Littlewood--Paley projector} $P_N$ is the Fourier multiplier with symbol 
$\eta_N$, i.e.\ for $f$ with Fourier coefficients $\{\widehat{f}_k\}_{k\in\mathbb{Z}^2}$,
\begin{align}
    \widehat{(P_N f)}_k:=\eta_N(k)\,\widehat{f}_k,
    \qquad k\in\mathbb{Z}^2.
\end{align}
For any threshold $N\in(0,\infty)$ we define the low- and high-frequency projections
\begin{align}
    P_{\le N}:=\sum_{\substack{M\in 2^{\mathbb{N}_0}\\ M\le N}}P_M,
    \qquad
    P_{>N}:=\sum_{\substack{M\in 2^{\mathbb{N}_0}\\ M>N}}P_M
    =I-P_{\le N}.
\end{align}
Each $P_N$, $P_{\le N}$, and $P_{>N}$ is a bounded linear operator on 
$H^s(\mathbb{T}^2)$ for every $s\ge 0$, with operator norm at most $1$; in particular 
the projections do not increase the $H^s$ norm.

\subsection{Useful Estimates}
We record two properties of these spaces that are used repeatedly in the sequel. For $K>0$ let $P_{\le K}$ denote the Fourier projection onto frequencies 
$\Lambda_K:=\{k=(m,\ell)\in\mathbb{Z}^2:|m|,|\ell|\leq K\}$, that is, 
$\widehat{P_{\leq K}f}_k=\widehat{f}_k$ for $k\in\Lambda_K$ and $0$ otherwise.

\begin{lemma}[Bernstein inequality]\label{lem:bernstein}
Let $0\leq s\leq s'$. For every $f\in H^{s'}(\mathbb{T}^2)$ and every $K>0$,
\begin{align}\label{eq:bernstein-low}
\|P_{\leq K}f\|_{H^{s'}}\leq \langle K\rangle^{\,s'-s}\,\|P_{\le K}f\|_{H^s},
\end{align}
and dually, for every $f\in H^{s'}(\mathbb{T}^2)$ frequency-supported outside 
$\Lambda_K$ \textnormal{(}i.e.\ $\widehat{f}_k=0$ for $k\in\Lambda_K$\textnormal{)},
\begin{align}\label{eq:bernstein-high}
\|f\|_{H^s}\leq \langle K\rangle^{\,-(s'-s)}\,\|f\|_{H^{s'}}.
\end{align}
\end{lemma}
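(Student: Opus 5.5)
The plan is to work entirely on the Fourier side, using the isometric identification of $H^s(\mathbb{T}^2)$ with $h^s(\mathbb{Z}^2)$ recorded after Definition~\ref{def:sobolev}. Both inequalities then reduce to a pointwise comparison of the weights $\langle k\rangle^{2s'}$ and $\langle k\rangle^{2s}$ on the relevant frequency set. The key identity is
\begin{align}
\langle k\rangle^{2s'}=\langle k\rangle^{2(s'-s)}\,\langle k\rangle^{2s},
\end{align}
with $s'-s\ge 0$, so the map $t\mapsto t^{2(s'-s)}$ is non-decreasing on $[1,\infty)$.

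For the low-frequency estimate \eqref{eq:bernstein-low}, I would write
\begin{align}
\|P_{\le K}f\|_{H^{s'}}^2=\sum_{k\in\Lambda_K}\langle k\rangle^{2(s'-s)}\langle k\rangle^{2s}|\widehat f_k|^2 .
\end{align}
I would then bound $\langle k\rangle^{2(s'-s)}$ by its supremum over $\Lambda_K$ and pull it out of the sum.

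The main point to check is that supremum. Since $\Lambda_K$ is a box rather than a disc, a point $k=(m,\ell)\in\Lambda_K$ only satisfies $|m|^2+|\ell|^2\le 2K^2$. This gives
\begin{align}
\langle k\rangle\le (1+2K^2)^{1/2}\le \sqrt2\,\langle K\rangle ,
\end{align}
not $\langle k\rangle\le\langle K\rangle$. Indeed, the corner $k=(K,K)$ with $K\in\mathbb{N}$ shows that the constant $1$ cannot be kept. I would therefore prove the estimate in the form
\begin{align}
\|P_{\le K}f\|_{H^{s'}}\le 2^{(s'-s)/2}\langle K\rangle^{s'-s}\|P_{\le K}f\|_{H^s}.
\end{align}
Equivalently, one could replace $\langle K\rangle$ by $\langle \sqrt2 K\rangle$, or measure frequencies in the $\ell^\infty$ norm. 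The paper's statement then holds up to this harmless dimensional constant, which is all that any subsequent use should require.

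For the high-frequency estimate \eqref{eq:bernstein-high}, the box geometry works in our favor. If $k\notin\Lambda_K$, then $\max(|m|,|\ell|)>K$, so $|k|>K$ and $\langle k\rangle>\langle K\rangle$. Hence on the support of $\widehat f$ we have
\begin{align}
\langle k\rangle^{2s}=\langle k\rangle^{-2(s'-s)}\langle k\rangle^{2s'}\le\langle K\rangle^{-2(s'-s)}\langle k\rangle^{2s'} .
\end{align}
Summing over $k$ and taking square roots gives \eqref{eq:bernstein-high} exactly, with constant $1$.

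Finiteness of all the sums is automatic from $f\in H^{s'}$, so there are no convergence issues. The only genuine obstacle is the constant in the low-frequency bound described above.
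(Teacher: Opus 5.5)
The paper states this lemma without proof, so there is no argument of the authors' to compare against. Your Fourier-side comparison of the weights $\langle k\rangle^{2s'}=\langle k\rangle^{2(s'-s)}\langle k\rangle^{2s}$ on the relevant frequency set is the standard proof, and both halves are correct as you wrote them.

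Your objection to the low-frequency half is also correct, and it is a defect of the statement rather than of your proof. Take $K\in\mathbb{N}$ and $s'>s$, and let $f=e^{i(Kx+Ky)}$ be the single corner mode. Then $P_{\le K}f=f$ and
\begin{align}
\frac{\|P_{\le K}f\|_{H^{s'}}}{\|P_{\le K}f\|_{H^{s}}}=(1+2K^2)^{(s'-s)/2}>(1+K^2)^{(s'-s)/2}=\langle K\rangle^{s'-s}.
\end{align}
So \eqref{eq:bernstein-low} fails with constant $1$ for the box projection onto $\Lambda_K$. Your factor $2^{(s'-s)/2}$, coming from $1+2K^2\le 2\langle K\rangle^2$, is the right repair.

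Renormalizing $\langle K\rangle$ as $(1+2K^2)^{1/2}$ would not help. With that choice, \eqref{eq:bernstein-high} fails at $k=(\lfloor K\rfloor+1,0)$ once $K\ge 3$. With the Euclidean weight and a square cutoff, a constant is unavoidable in one of the two inequalities, unless you switch to the $\ell^\infty$ weight as you suggest.

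Nothing downstream is affected. The only use of the lemma in the paper is the truncation bound \eqref{eq:alias-bound} in the proof of Theorem~\ref{thm3.1}. That bound needs only the high-frequency half with $s'=s+1$, together with $\langle K\rangle^{-1}\le K^{-1}$, and you prove that half with constant $1$.
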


\begin{lemma}[Algebra property]\label{lem:algebra}
Let $s>1$. Then $H^s(\mathbb{T}^2)$ is a Banach algebra under pointwise 
multiplication, that is, there exists a constant $C_s>0$, depending only on $s$, such that 
for all $f,g\in H^s(\mathbb{T}^2)$,
\begin{align}\label{eq:algebra}
\|fg\|_{H^s}\leq C_s\,\|f\|_{H^s}\,\|g\|_{H^s}.
\end{align}
In particular, for any three functions $f,g,h\in H^s(\mathbb{T}^2)$ the cubic 
expression obeys
\begin{align}\label{eq:trilinear}
\|fgh\|_{H^s}\leq C_s\,\|f\|_{H^s}\,\|g\|_{H^s}\,\|h\|_{H^s}.
\end{align}
\end{lemma}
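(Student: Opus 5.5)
The plan is to work entirely on the Fourier side. By Definition~\ref{def:sobolevH}, $H^s(\mathbb{T}^2)$ is identified with $h^s(\mathbb{Z}^2)$, and pointwise multiplication becomes convolution of coefficients:
\begin{align}
\widehat{(fg)}_k=\sum_{j\in\mathbb{Z}^2}\widehat{f}_{k-j}\,\widehat{g}_j .
\end{align}
So it suffices to bound $\langle k\rangle^{s}\bigl|\sum_j \widehat f_{k-j}\widehat g_j\bigr|$ in $\ell^2_k$ by $\|f\|_{H^s}\|g\|_{H^s}$. We may assume $f,g$ smooth (finitely many nonzero modes) and then pass to the closure by density.

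\textbf{Step 1 (weight splitting).} For every $k,j\in\mathbb{Z}^2$ we have $\langle k\rangle\le \langle k-j\rangle+\langle j\rangle$, since $|k|\le|k-j|+|j|$ and $\sqrt{1+a^2}\le\sqrt{1+b^2}+\sqrt{1+c^2}$ whenever $a\le b+c$. This gives
\begin{align}
\langle k\rangle^{s}\le 2^{s}\bigl(\langle k-j\rangle^{s}+\langle j\rangle^{s}\bigr).
\end{align}
Hence
\begin{align}
\langle k\rangle^s|\widehat{(fg)}_k|\le 2^s\Bigl(\sum_j \langle k-j\rangle^s|\widehat f_{k-j}|\,|\widehat g_j| + \sum_j |\widehat f_{k-j}|\,\langle j\rangle^s|\widehat g_j|\Bigr).
\end{align}

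\textbf{Step 2 (Young's inequality).} Each term is a convolution of an $\ell^2$ sequence with an $\ell^1$ sequence. By Young's inequality $\|a*b\|_{\ell^2}\le\|a\|_{\ell^2}\|b\|_{\ell^1}$, we obtain
\begin{align}
\|fg\|_{H^s}\le 2^s\bigl(\|f\|_{H^s}\|\widehat g\|_{\ell^1}+\|\widehat f\|_{\ell^1}\|g\|_{H^s}\bigr).
\end{align}

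\textbf{Step 3 (the key point, $\ell^1$ control).} This is where $s>1$ enters, and it is the only real obstacle. By Cauchy--Schwarz,
\begin{align}
\|\widehat g\|_{\ell^1}=\sum_k \langle k\rangle^{-s}\langle k\rangle^{s}|\widehat g_k|\le\Bigl(\sum_{k\in\mathbb{Z}^2}\langle k\rangle^{-2s}\Bigr)^{1/2}\|g\|_{H^s}.
\end{align}
The lattice sum converges exactly when $2s>2$. To see this, compare it with $\int_{\mathbb{R}^2}(1+|x|^2)^{-s}\,dx$, or count that roughly $2^{2n}$ lattice points lie in the dyadic shell $|k|\sim 2^n$. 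The same bound holds for $\widehat f$. Combining Steps 2 and 3 yields \eqref{eq:algebra} with
\begin{align}
C_s=2^{s+1}\Bigl(\sum_{k\in\mathbb{Z}^2}\langle k\rangle^{-2s}\Bigr)^{1/2}.
\end{align}

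\textbf{Step 4 (trilinear bound and completeness).} The estimate \eqref{eq:trilinear} follows by applying \eqref{eq:algebra} twice: first to $f\cdot(gh)$, then to $gh$. This produces the constant $C_s^2$, and we rename $C_s$ accordingly. Finally, $H^s$ is complete because it is isometric to the weighted $\ell^2$ space $h^s$. Together with the bound above and the density argument, this shows that $H^s$ is a Banach algebra under pointwise multiplication.
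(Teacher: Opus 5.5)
The paper states this lemma without proof, as a standard fact, so there is no argument of its own to compare with. Your proof is correct and is the standard one. It combines three steps: the weight splitting $\langle k\rangle^s\le 2^s(\langle k-j\rangle^s+\langle j\rangle^s)$, Young's inequality $\|a*b\|_{\ell^2}\le\|a\|_{\ell^2}\|b\|_{\ell^1}$, and Cauchy--Schwarz to get $\|\widehat g\|_{\ell^1}\lesssim\|g\|_{H^s}$. The last step is exactly where $s>1$ (half the dimension) is needed.

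One small point should be tidied. Smooth functions need not have finitely many nonzero modes; what you mean is trigonometric polynomials. In fact no density step is needed at all. For $s>1$ the coefficient sequences lie in $\ell^1$, so the Fourier series of $f$ and $g$ converge absolutely. The convolution formula for $\widehat{(fg)}_k$ therefore holds directly for all $f,g\in H^s$.
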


\subsection{The Model}
We consider the cubic defocusing nonlinear Schr\"odinger equation on $\mathbb{T}^2$,
\begin{equation}\label{eq:nls-ivp}
\begin{cases}
i\partial_t \psi + \Delta_\omega \psi = |\psi|^2\psi,
&  (t,x,y)\in\mathbb{R}\times\mathbb{T}^2,\\[2pt]
\psi(0,\cdot) = \psi_0 \in H^s(\mathbb{T}^2).
\end{cases}
\end{equation}
Writing the solution in terms of its Fourier coefficients,
\begin{align}\label{exactsol}
\psi(t,x,y)=\sum_{k=(m,\ell)\in\mathbb{Z}^2}\widehat{\psi}_k(t)\,e^{i(mx+\ell y)},
\end{align}
substituting into \eqref{eq:nls-ivp}, and using \eqref{def:laplacian-eigenvalue} together with
\begin{align}
\widehat{(|\psi|^2\psi)}_k
=\sum_{\substack{k_1-k_2+k_3=k \\ k_j\in\mathbb{Z}^2}}
\widehat{\psi}_{k_1}\,\overline{\widehat{\psi}_{k_2}}\,\widehat{\psi}_{k_3},
\end{align}
the equation \eqref{eq:nls-ivp} is equivalent to the infinite system of ODEs for the Fourier coefficients
\begin{equation}\label{nlsmode}
    i\partial_t\widehat{\psi}_k
    -
    \lambda_k\widehat{\psi}_k
    =
    \sum_{\substack{k_1-k_2+k_3=k\\k_j\in\mathbb{Z}^2}}
    \widehat{\psi}_{k_1}
    \overline{\widehat{\psi}_{k_2}}
    \widehat{\psi}_{k_3},
\end{equation}
where $\lambda_k = m^2+\omega^2\ell^2$ for $k=(m,\ell) \in \mathbb{Z}^2$.

The cubic nonlinearity in \eqref{nlsmode} couples quartets of Fourier modes $(k_1,k_2,k_3,k_4)$ satisfying the momentum relation $k_1+k_2=k_3+k_4$. To see why a distinguished subclass of these quartets governs the long-time dynamics, it is convenient to pass to the interaction representation by removing the linear flow. 
Writing $\widehat{\psi}_k(t)=e^{-i\lambda_k t}a_k(t)$, the linear part of \eqref{nlsmode} is absorbed into the exponential, and the equation for the slowly varying amplitudes $a_k$ becomes
\begin{equation}\label{eq:interaction}
i\partial_t a_k=\sum_{\substack{k_1-k_2+k_3=k\\k_j\in\mathbb{Z}^2}}e^{\,i\Omega(k_1,k_2,k_3,k)\,t}\,a_{k_1}\overline{a_{k_2}}a_{k_3},
\quad
\Omega:=\lambda_{k_1}-\lambda_{k_2}+\lambda_{k_3}-\lambda_k,
\end{equation}
where, relabeling $k_4=k$, the momentum constraint reads $k_1+k_3=k_2+k_4$ and the phase is the frequency mismatch
\begin{equation}\label{eq:mismatch}
\Omega(k_1,k_2,k_3,k_4)
:=\lambda_{k_1}+\lambda_{k_2}-\lambda_{k_3}-\lambda_{k_4}.
\end{equation}
Each term in \eqref{eq:interaction} thus carries an oscillatory factor $e^{i\Omega t}$. When $\Omega\neq 0$, this phase rotates in time, and over long intervals the contribution of that quartet averages to a negligible amount by the method of stationary phase. Such rapid oscillation causes successive contributions to cancel, so the interaction transfers essentially no energy between the modes. When $\Omega=0$, the phase is stationary, $e^{i\Omega t}\equiv 1$, and the corresponding term contributes coherently and cumulatively for all time. These persistent interactions are the ones that actually drive the transfer of energy across frequencies. Accordingly, a quartet $(k_1,k_2,k_3,k_4)$ with $k_1+k_2=k_3+k_4$ is called \emph{resonant} precisely when its frequency mismatch vanishes,
\begin{equation}\label{eq:resonance}
\lambda_{k_1}+\lambda_{k_2}=\lambda_{k_3}+\lambda_{k_4}
\quad\Longleftrightarrow\quad
\Omega(k_1,k_2,k_3,k_4)=0.
\end{equation}
Using $\lambda_k=m^2+\omega^2\ell^2$, the resonance condition \eqref{eq:resonance} separates into its $x$- and $y$-components,
\begin{equation}\label{eq:mismatch-split}
\Omega(k_1,k_2,k_3,k_4)
=\underbrace{(m_1^2+m_2^2-m_3^2-m_4^2)}_{=:p\,\in\,\mathbb{Z}}
+\omega^2\underbrace{(\ell_1^2+\ell_2^2-\ell_3^2-\ell_4^2)}_{=:q\,\in\,\mathbb{Z}}.
\end{equation}
When $\omega^2\notin\mathbb{Q}$, the condition $p+\omega^2q=0$ with $p,q\in\mathbb{Z}$ forces $p=q=0$ separately, decoupling the resonances into two independent one-dimensional conditions and greatly restricting the resonant set; when $\omega^2\in\mathbb{Q}$, genuinely two-dimensional resonant interactions occur. Thus the rational and irrational cases are distinguished entirely through the arithmetic of $\lambda_k$, and the size of $\Omega$ is governed by how well $\omega^2$ is approximated by rationals. Consequently, on irrational tori the resonant interactions that drive efficient energy transfer to high frequencies are fewer, so both the transfer of energy and the growth of Sobolev norms are generally weaker than on rational tori \cite{hrabski2021energy}. If the aspect ratio remains close to rational, some of the growth mechanisms of the rational setting may persist. This sensitivity to the geometry of the domain is one of the main reasons rational and irrational tori provide a meaningful benchmark for the operator-learning study carried out in this paper.

\subsection{Known Theoretical Results}

In this subsection, we collect a few results that study the long-time behavior and growth of Sobolev norms of solutions on generic irrational tori and rational tori, and show that the growth on irrational tori is significantly more constrained than in the rational case.

On square tori,
\begin{theorem}[{\cite[Theorem 1.5]{HK26}}]\label{thm2.5}
For $N \in 2^{\mathbb{N}}$, denote by $\phi^N$ the function
\begin{align}
    \phi^N : = \mathcal{F}^{-1} (\chi_{N^{10} \mathbb{Z}^2} \cdot e^{-|\xi / N^{11}|^2})
\end{align}
Let $T>0$ and $\lambda >0$ be a small number. Let $\psi^N \in C_{loc}^{\infty} (\mathbb{R} \times \mathbb{T}^2)$ be the solution to \eqref{eq:nls-ivp} with the initial datum $\psi_0^N : = \lambda N^{-1} \phi^N$. We have
\begin{align}
    \| \psi_0^N\|_{L^2} \sim \lambda , \quad \log \| \psi_0^N\|_{H^1} \leq C \log N
\end{align}
and
\begin{align}
    \limsup_{N \to \infty} \| \psi^N (t) - e^{-3i t \lambda^2 \ln N} e^{it \Delta }\psi_0^N\|_{C^0 L^2 \cap L_{t,x}^4 ([0, \frac{T}{\log N}) \times \mathbb{T}^2)} =0 .
\end{align}
\end{theorem}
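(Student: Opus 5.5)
Since $\phi^N$ is built on the sparse lattice $N^{10}\mathbb{Z}^2$ with a Gaussian envelope at scale $N^{11}$, it behaves like a rescaled Gaussian on $\mathbb{R}^2$ sampled at lattice points. So I would prove the statement as a short-time, small-data approximation in which the only surviving nonlinear effect is a phase rotation of size $\log N$. The plan has three steps: verify the size bounds on $\psi_0^N$; identify the resonant part of the cubic nonlinearity acting on $e^{it\Delta}\psi_0^N$; and run a perturbative stability argument on $[0,T/\log N)$ in $C^0L^2\cap L^4_{t,x}$.

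\textbf{Step 1: size bounds.} By Parseval \eqref{parseval},
\begin{align}
\|\phi^N\|_{L^2}^2\sim\sum_{j\in\mathbb{Z}^2}e^{-2|j|^2N^{20}/N^{22}}\sim N^2,
\end{align}
since this is a Riemann sum with mesh $N^{-1}$. Hence $\|\psi_0^N\|_{L^2}\sim\lambda$. The frequencies carrying the mass have size $\lesssim N^{11}$, so $\|\psi_0^N\|_{H^1}\lesssim N^{11}\lambda$. This gives $\log\|\psi_0^N\|_{H^1}\le C\log N$.

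\textbf{Step 2: the resonant interactions.} I would pass to the interaction representation \eqref{eq:interaction} with $\omega=1$. The Fourier support of $e^{it\Delta}\psi_0^N$ stays in $N^{10}\mathbb{Z}^2$. Writing $k_j=N^{10}n_j$, the mismatch becomes $\Omega=N^{20}\Omega(n_1,\dots,n_4)$, an integer multiple of $N^{20}$. I would split the sum into three parts.
\begin{itemize}
\item The trivial resonances $\{k_1,k_3\}=\{k_2,k\}$ give $2\big(\sum_j|a_j|^2\big)a_k-|a_k|^2a_k$, up to the $(2\pi)^2$ normalization.
\item The remaining resonant quartets ($\Omega=0$, i.e.\ rectangles in $\mathbb{Z}^2$) are counted by a divisor-type bound. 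The quartets inside a ball of radius $M\sim N$ number about $M^6\log M$; this is the source of the factor $\log N$.
\item The nonresonant quartets have $|\Omega|\ge N^{20}$. A single normal-form step (integration by parts in time) shows their contribution is $O(N^{-20}\cdot\mathrm{poly}(N))$, which is negligible.
\end{itemize}
The key computation is the discrete analogue of the fact that, for the rescaled Gaussian, the resonant cubic sum equals $\tfrac{3}{2}\lambda^2\ln N\cdot a_k$ to leading order, plus lower-order terms in $L^2$. This would come from a lattice-point count of right-angle configurations weighted by Gaussian factors, using the $\log M$ asymptotics of the square-torus resonance count, with the constant $3$ coming from the combinatorics. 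With this in hand, the resonant ODE is solved to leading order by the pure phase $e^{-3it\lambda^2\ln N}$.

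\textbf{Step 3: stability.} Set $v=\psi^N-e^{-3it\lambda^2\ln N}e^{it\Delta}\psi_0^N$. On $[0,T/\log N)$, I would estimate $v$ with Strichartz estimates on $\mathbb{T}^2$ with the $N^{\epsilon}$ loss, together with Bernstein's inequality (Lemma~\ref{lem:bernstein}) to control the frequency localization. The smallness of $\lambda$ closes a bootstrap. The error forcing is the non-phase resonant remainder plus the nonresonant terms, and it is $o(1)$ as $N\to\infty$. Because the time interval has length $T/\log N$ while the phase rotation is of order $\log N$, the phase is exactly compensated and the remaining error accumulates only to $o(1)$.

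The main obstacle is the precise asymptotic of the Gaussian-weighted resonant sum in Step 2. Getting the constant $3$ and showing that the remainder is $o(\log N)$ uniformly requires sharp number-theoretic counting of rectangles, together with equidistribution of the Riemann sums. I would attempt this first, by reducing to a Fourier-side computation of $\int|e^{it\Delta}\phi|^4$-type quantities.
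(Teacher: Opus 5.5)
The paper gives no proof of this statement; it is quoted from \cite{HK26}. I am therefore judging your outline on its own merits. Steps 1--2 have the right skeleton. The sublattice $N^{10}\mathbb{Z}^2$ is invariant under the flow. Every non-resonant quartet has $|\Omega|\ge N^{20}$, so polynomial counting losses are harmless in the normal form. The phase comes from a Gaussian-weighted count of lattice rectangles.

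The genuine gap is Step 3. Strichartz estimates with an $N^{\epsilon}$ loss do not suffice for the stability argument you describe. The data live at frequencies up to $N^{11}$, so every application of such an estimate costs a factor polynomial in $N$. Already the term linear in the difference is then bounded only by $N^{c\epsilon}\lambda^2$ times the difference, which is not perturbative for fixed $\lambda$. Shortening the interval to $T/\log N$ gains only a logarithm.

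You also cannot rescue this by treating the linear term explicitly and making the forcing tiny. After removing $3\lambda^2\ln N\,a_k$, the resonant remainder is genuinely of size $O(\lambda^3)$ in $\ell^2$. It consists of the trivial resonances $2\|a\|_{\ell^2}^2a_k-|a_k|^2a_k$, the $O(1)$ term in the $\ln N$ asymptotics, and the range of directions $|p|\sim N$ where Riemann sums fail. So the difference has size $\sim\lambda^3T/\log N$. At that size, the quadratic and cubic terms estimated with an $N^\epsilon$ loss exceed the difference itself.

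The missing ingredient is a \emph{loss-free} bound at the $1/\log N$ time scale, namely the sharp estimate
$\|e^{it\Delta}P_{\le M}f\|_{L^4([0,1/\log M]\times\mathbb{T}^2)}\lesssim\|f\|_{L^2}$ of Herr and Kwak. With it, the argument runs as follows:
\begin{enumerate}
\item Split $[0,T/\log N)$ into $O(T)$ subintervals of length $\sim 1/\log N$. On each of them $\|e^{it\Delta}\psi_0^N\|_{L^4}\lesssim\lambda$.
\item Smallness of $\lambda$ closes the bootstrap on each subinterval at the cost of a factor $C$. This gives $C^{O(T)}$ overall, which is harmless against an $o(1)$ forcing.
\item Frequencies above $N^{11+\delta}$ are discarded by energy conservation and Bernstein.
\end{enumerate}
Alternatively, you can avoid general Strichartz bounds by measuring the difference in a Gaussian-weighted $\ell^\infty$ norm on the Fourier side, where your Step 2 count itself bounds the resonant trilinear form by $O(\ln N)$. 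Either way, some $N^\epsilon$-free bound is indispensable.

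There are two smaller errors in Step 2. First, a ball of radius $M$ contains $\sim M^4\log M$ resonant quartets ($\sim M^2\log M$ with the output mode fixed), not $M^6\log M$. Second, the leading coefficient is $3\lambda^2\ln N$, not $\tfrac32\lambda^2\ln N$. With $\tfrac32$ you would get the phase $e^{-\frac32 it\lambda^2\ln N}$, contradicting your own conclusion.

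To see the constant, use reduced coordinates ($k=N^{10}n$). The nontrivial resonant triples for output $n$ are $n_1=n+dp$, $n_3=n+ep^\perp$, $n_2=n+dp+ep^\perp$, with $p$ primitive modulo $\pm$ and $d,e\neq 0$. On the resonant set $|n_1|^2+|n_2|^2+|n_3|^2=|n|^2+2|n_2|^2$, so the factor $e^{-|n|^2/N^2}$ comes out exactly for every $n$. For $|p|\ll N$ the sum over $(d,e)$ gives $\frac{\pi}{2}N^2|p|^{-2}$. Summing over primitive directions, $\sum_{|p|\le N}|p|^{-2}=\frac{6}{\pi}\ln N+O(1)$. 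The ordered sum is therefore $3\lambda^2\ln N\,a_n$ up to $O(\lambda^3)$ in $\ell^2$.
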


On generic irrational tori,
\begin{theorem}[{\cite[Theorem 1.1]{hrabski2021energy}}]\label{thm1.1}
Assume that the solution of a periodic NLS equation as in \eqref{eq:nls-ivp}, not necessarily on an irrational torus, satisfies for $s\gg 1$ the asymptotic estimate $\|\psi(t)\|_{H^s}\leq C|t|^{s\alpha}R^{2\alpha+1}$ for $\alpha>0$ and $R=\|\psi(0)\|_{H^s}$. Then on a torus $T^2_{\underline{\omega}}$, where $\omega\in\mathbb{R}^2_+$ is a generic irrational ordered pair, if the initial data has bounded frequency support, we can improve the estimate above for $|t|\gg 1$ to
\begin{equation*}
\|\psi(t)\|_{H^s}\leq C|t|^{s\left(\frac{\alpha(s+2+2\tau)+1}{3s+2\tau-2}\right)}
R^{\frac{s(2\alpha+1)}{3s+2\tau-2}}
L^{-\frac{3s}{3s+2\tau-2}},
\end{equation*}
where $\tau>1$ and the constant $C$ depends on $\tau$, $\omega$ and possibly on the size of the support, and $L$ is the $L^2$ norm of the initial data.
\end{theorem}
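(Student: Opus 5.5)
The plan is a frequency-splitting argument. Fix a threshold $K\gg 1$, to be chosen later as a power of $|t|$. We decompose $\psi(t)=P_{\le K}\psi(t)+P_{>K}\psi(t)$. The low part is controlled trivially by mass conservation and the Bernstein inequality (Lemma~\ref{lem:bernstein}): $\|P_{\le K}\psi(t)\|_{H^s}\lesssim \langle K\rangle^{s}L$. All the work is in showing that, on a generic irrational torus, very little mass reaches frequencies above $K$ unless $|t|$ is large compared with a power of $K$. The high part is then estimated by interpolating between this small $L^2$ mass and the assumed a priori bound $\|\psi(t)\|_{H^{s'}}\le C|t|^{s'\alpha}R^{2\alpha+1}$ at a higher regularity $s'$.

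\textbf{Step 1 (arithmetic input).} Genericity is used as a Diophantine condition: for $\tau>1$ and almost every $\omega$ there is $c_\omega>0$ with
\begin{align}
|p+\omega^2 q|\ge c_\omega |q|^{-\tau}\qquad\text{for all } p\in\mathbb{Z},\ q\in\mathbb{Z}\setminus\{0\}.
\end{align}
By \eqref{eq:mismatch-split}, any quartet with all frequencies of size $\lesssim K$ has $|q|\lesssim K^2$. Hence either it is exactly resonant, with $p=q=0$, or its mismatch satisfies $|\Omega|\gtrsim K^{-2\tau}$. Exact resonances split into two one-dimensional conditions, so they preserve the set of $x$- and $y$-frequencies present. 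Consequently they cannot move mass out of the box generated by the initial (bounded) frequency support. This is the support-propagation barrier.

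\textbf{Step 2 (slow leakage, the main obstacle).} Working in the interaction representation \eqref{eq:interaction}, we perform one normal-form step, i.e.\ integration by parts in time, on the non-resonant terms. Each time-integration gains a factor $|\Omega|^{-1}\lesssim K^{2\tau}$ and produces a boundary term plus a quintic remainder. We estimate these using the algebra property (Lemma~\ref{lem:algebra}) and the assumed $H^{s'}$ bound. The aim is an estimate on the high-frequency mass of the form
\begin{align}
\|P_{>K}\psi(t)\|_{L^2}\lesssim |t|^{a}R^{b}K^{-c},
\end{align}
with explicit exponents $a,b,c$ depending on $s,\tau,\alpha$. The difficulty is that cascades proceed through many intermediate generations of modes. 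Quasi-resonant quartets whose $|q|$ is near $K^2$ are the leakiest, so one must count them carefully and avoid losing powers of $K$ in the sums over quartets. This counting is where the exponent $2\tau-2$ in the final bound should originate. I would first try a bootstrap on dyadic shells $P_N$ with $N\le K$, tracking how mass moves shell by shell.

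\textbf{Step 3 (interpolation and optimization).} For the high part we use Bernstein-type interpolation between $L^2$ and $H^{s'}$,
\begin{align}
\|P_{>K}\psi\|_{H^s}\le \|P_{>K}\psi\|_{L^2}^{1-\theta}\,\|\psi\|_{H^{s'}}^{\theta},\qquad \theta=s/s'.
\end{align}
Into this we insert the leakage bound from Step 2 and the hypothesis $\|\psi(t)\|_{H^{s'}}\le C|t|^{s'\alpha}R^{2\alpha+1}$. Adding the low-frequency bound $K^sL$, we choose $K=K(t,R,L)$ to balance the two contributions. Solving the resulting power equation should give the exponents $\frac{\alpha(s+2+2\tau)+1}{3s+2\tau-2}$, $\frac{s(2\alpha+1)}{3s+2\tau-2}$ and $-\frac{3s}{3s+2\tau-2}$. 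Once the leakage exponents in Step 2 are pinned down, this last step is routine algebra. The constant depends on $\tau$ through $c_\omega$, and on the support size through the initial box.
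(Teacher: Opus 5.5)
The paper does not prove this statement; it quotes it from \cite{hrabski2021energy}. Your outline therefore has to stand on its own, and it has a genuine gap. Step~1 is fine as far as it goes. Step~2, however, carries the entire content of the theorem, and as written it is a description of a hope rather than an argument. Moreover, the mechanism you describe for it cannot produce an improvement. First, the exact resonances do not drop out of $\frac{d}{dt}\|P_{>K}\psi\|_{L^2}^2$. For every $\omega$ the rectangle quartets $k_1=(a,c)$, $k_2=(b,d)$, $k_3=(a,d)$, $k_4=(b,c)$ satisfy $k_1+k_2=k_3+k_4$ and $\Omega=0$. If $|a|,|d|>K\ge|b|,|c|$, the exchange $\{k_3,k_4\}\to\{k_1,k_2\}$ turns one mode outside $\Lambda_K$ into two. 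Step~1 only shows that the purely resonant flow cannot leave the box spanned by the \emph{initial} support. Once quasi-resonances have put some mass outside that box, which is exactly the regime Step~2 must control, these $\Omega=0$ terms act on it with no small divisor and cannot be integrated by parts away. To make Step~1 usable you need a functional that the resonant part conserves. Since $\Omega=0$ with $\omega^2\notin\mathbb{Q}$ forces the $x$-coordinates of $\{k_1,k_2\}$ and $\{k_3,k_4\}$ to agree as multisets, and likewise the $y$-coordinates, every separable weight $f(m)+f(\ell)$ is exactly conserved. Examples are $\sum_k(\langle m\rangle^{2s}+\langle\ell\rangle^{2s})|\widehat{\psi}_k|^2\simeq\|\psi\|_{H^s}^2$ and $\sum_k(\mathbf{1}_{|m|>K}+\mathbf{1}_{|\ell|>K})|\widehat{\psi}_k|^2$. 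Only for such functionals does every surviving quartet satisfy $|\Omega|\gtrsim N_{\max}^{-2\tau}$.

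Second, even with the right functional, the normal form as you propose to estimate it gives nothing. The factor $|\Omega|^{-1}\lesssim K^{2\tau}$ is a \emph{loss}, not a gain. Any quartet feeding a mode with $|k|>K$ contains a second mode of size $\gtrsim K/3$. Suppose you bound the boundary and remainder terms with Lemma~\ref{lem:algebra} and the a priori tail $\|P_{\gtrsim K/3}\psi\|_{L^2}\lesssim K^{-s'}\|\psi\|_{H^{s'}}$, as you propose. You then get at best $\sup_{[0,t]}\|P_{>K}\psi\|_{L^2}\lesssim(1+|t|)K^{2\tau}K^{-s'}\sup_{[0,t]}\|\psi\|_{H^{s'}}$, which is worse than the trivial bound $K^{-s'}\|\psi(t)\|_{H^{s'}}$. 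A bootstrap over dyadic shells only compounds this factor, and Step~3 then just returns the hypothesis. What is missing is an estimate in which the small divisors are paid for by regularity and the remainder is genuinely smaller than the quartic term it replaces; you have not supplied one. The target itself indicates what such an estimate must look like. Its right-hand side equals $\bigl(|t|^{1+\alpha(s+2+2\tau)}R^{2\alpha+1}L^{-3}\bigr)^{s/(3s+2\tau-2)}$. This reads as one time integration, with the hypothesis invoked once at regularity $s+2+2\tau$. Here $\|\psi_0\|_{H^{s+2+2\tau}}\lesssim_{N_0}R$, which is where the support dependence of $C$ enters. So the argument loses $2\tau$ derivatives from $|\Omega|^{-1}\lesssim|q|^{\tau}$ plus two more. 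Your Step~3, by contrast, has the free parameters $a,b,c,s'$, so the claim that the optimization ``should give'' the stated exponents verifies nothing until Step~2 is actually proved.
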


\begin{corollary}[{\cite[Corollary 1.2]{hrabski2021energy}}]\label{cor1.2}
Let $\varepsilon > 0$ and let $ \frac{5}{2} + (2\varepsilon)^{-1} < s$. Then the solution $\psi$ of the NLS equation in \eqref{eq:nls-ivp}, and with initial data of bounded frequency support, is such that
\begin{align}
    \|\psi(t)\|_{H^s} \le C |t|^{\varepsilon s} R^{2\varepsilon+1},
\end{align}
for $t$ large enough, where the constant $C$ only depends on $\omega$, the size of the support of the
initial data and its $L^2$ norm.
\end{corollary}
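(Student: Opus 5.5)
The plan is to deduce the corollary from Theorem~\ref{thm1.1} by specializing its parameters. First I need an a priori bound of the form $\|\psi(t)\|_{H^s}\le C|t|^{s\alpha}R^{2\alpha+1}$, valid on any torus and hence also on $\mathbb{T}^2_{\underline{\omega}}$, for some admissible $\alpha$. For the cubic defocusing NLS in two dimensions I will take a polynomial-growth bound of Bourgain type (or of the type obtained by Planchon--Tzvetkov--Visciglia \cite{planchon2017growth}). It follows from conservation of mass and energy together with a smoothing or modified-energy argument, and gives $\|\psi(t)\|_{H^s}\lesssim |t|^{s\alpha}$ with some explicit $\alpha>0$ (for instance $\alpha=1$ or any larger value). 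The $R$-dependence must be tracked so that it has the form $R^{2\alpha+1}$. I will take this input bound as given, exactly as the hypothesis of Theorem~\ref{thm1.1} allows.

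Next I apply Theorem~\ref{thm1.1} with this $\alpha$ and some $\tau>1$, to be chosen close to $1$. The resulting exponent of $|t|$ is $s\,\frac{\alpha(s+2+2\tau)+1}{3s+2\tau-2}$. As $s\to\infty$ this tends to $s\alpha/3$, so a single application improves $\alpha$ to roughly $\alpha/3$. The key step is therefore to \emph{iterate}. I feed the improved estimate back in as the new hypothesis, producing a sequence
\[
\alpha_{j+1}=\frac{\alpha_j(s+2+2\tau)+1}{3s+2\tau-2}.
\]
The $R$-exponent must also reproduce the form $R^{2\alpha_{j+1}+1}$. Because of the factor $L^{-3s/(3s+2\tau-2)}$, I will absorb the powers of $L$ (the fixed $L^2$ norm) into the constant. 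I also use $R\gtrsim L$ and the fact that $R$ is controlled by $L$ and the support size for bounded-support data.

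The map $\alpha\mapsto\frac{\alpha(s+2+2\tau)+1}{3s+2\tau-2}$ is an affine contraction, with slope about $1/3$. Its fixed point is
\[
\alpha^*=\frac{1}{2s-4},
\]
and the iterates converge to $\alpha^*$. Given $\varepsilon>0$, I need $\alpha^*<\varepsilon$ with $\tau$ near $1$. This amounts to $2s-4>\varepsilon^{-1}$, roughly $s>2+(2\varepsilon)^{-1}$. The stated threshold $s>\tfrac52+(2\varepsilon)^{-1}$ leaves room for the slack needed to take $\tau>1$ strictly and to pass from the limit to a finite iterate $\alpha_j<\varepsilon$. After finitely many steps this gives $\|\psi(t)\|_{H^s}\le C|t|^{\varepsilon s}R^{2\varepsilon+1}$ for $t$ large. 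The constant depends on $\tau$, $\omega$, the support size and $L$, matching the statement.

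The main obstacle is the bookkeeping in the iteration. I must check that at each step the output of Theorem~\ref{thm1.1} is literally of the hypothesized form $C|t|^{s\alpha'}R^{2\alpha'+1}$, including the $R$-exponent and the "$|t|\gg1$" restriction, so the theorem can be reapplied. In particular the $R$-exponent $\frac{s(2\alpha+1)}{3s+2\tau-2}$ is not obviously $2\alpha_{j+1}+1$. I will handle the mismatch by bounding $R$ and $L$ in terms of each other for fixed data and folding the discrepancy into $C$. The finite number of iterations keeps the constants controlled.
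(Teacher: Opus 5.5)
Your argument is correct and is essentially the intended one. The paper gives no proof of its own but quotes the corollary from \cite{hrabski2021energy}, where it is deduced from Theorem~\ref{thm1.1} by iteration. Your fixed point $\alpha^*=1/(2s-4)$ of $\alpha\mapsto\frac{\alpha(s+2+2\tau)+1}{3s+2\tau-2}$ shows that the stated threshold $s>\tfrac52+(2\varepsilon)^{-1}$ more than suffices. Note that $\alpha^*$ does not depend on $\tau$, so no slack is actually spent on $\tau$. The $R$- and $L$-powers are absorbed as you say, using $L\lesssim R$ and $R\lesssim L$, the latter with a constant depending on the support size.

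One point is worth stating explicitly. The feedback step uses Theorem~\ref{thm1.1} solution by solution: its hypothesis is needed only along the given bounded-support solution on a generic torus. This matters because the improved bound holds only for such data and with support-dependent constants. If the theorem needs the hypothesis at all levels $s'\ge s$, that is harmless, since the improved exponent decreases in $s$.
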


\begin{remark}[{\cite[Remark 1.3]{hrabski2021energy}}]
This corollary tells us that if the torus is irrational enough, any initial data
with bounded frequency support evolves into a solution $\psi$ such that for $s \ge 3$
\begin{align}
    \|\psi(t)\|_{H^s} \lesssim (1+|t|),
\end{align}
namely any Sobolev norm has at most a linear growth.
\end{remark}

The following result, used in \cite{hrabski2021energy}, distinguishes generic irrational aspect ratios from rational ones.

\begin{theorem}[Diophantine approximation]\label{diophantine}
There exists a full-measure subset $E\subset\mathbb{R}$ such that for every $\alpha\in E$ and all $C>0$, $\tau>0$, the inequality
\begin{equation}
\left|\frac{p}{q}-\alpha\right|\leq\frac{C}{q^{2+\tau}}
\end{equation}
has only finitely many solutions $(p,q)\in\mathbb{Z}\times\mathbb{Z}^+$.
\end{theorem}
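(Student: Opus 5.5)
The plan is the standard Borel--Cantelli argument with Lebesgue measure. Fix $\tau>0$; it is enough to treat rational $\tau=1/j$, because the inequality becomes weaker as $\tau$ decreases. Likewise it is enough to treat $C=M\in\mathbb{Z}^+$, since a larger $C$ only admits more solutions. Let $E_{M,\tau}$ be the set of $\alpha$ for which the inequality has infinitely many solutions $(p,q)$. Then set
\[
E:=\mathbb{R}\setminus\bigcup_{M,j\ge 1}E_{M,1/j}.
\]
This is a countable union of exceptional sets, so it suffices to show that each $E_{M,\tau}$ has measure zero.

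Measure zero is a local property, so I will restrict to a bounded window $\alpha\in[n,n+1]$ for $n\in\mathbb{Z}$. For each $q\ge1$, any solution $p$ must satisfy $|p/q-\alpha|\le M/q^{2+\tau}\le M$. This forces $p\in[q(n-M),q(n+1+M)]$, so there are at most $(2M+1)q+1\lesssim_M q$ admissible values of $p$. Define
\[
A_q:=\bigcup_{p}\Bigl[\tfrac{p}{q}-\tfrac{M}{q^{2+\tau}},\,\tfrac{p}{q}+\tfrac{M}{q^{2+\tau}}\Bigr]\cap[n,n+1].
\]
Its measure is bounded by $|A_q|\lesssim_M q\cdot q^{-2-\tau}=q^{-1-\tau}$, and this bound is summable in $q$.

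Next I need to pass from ``infinitely many pairs $(p,q)$'' to ``infinitely many $q$''. For a fixed $q$ and fixed $\alpha$ only boundedly many $p$ satisfy the inequality: indeed $p$ lies within $Mq^{-1-\tau}\le M$ of $q\alpha$. Hence infinitely many solutions force $\alpha\in A_q$ for infinitely many $q$, that is, $E_{M,\tau}\cap[n,n+1]\subset\limsup_q A_q$. The Borel--Cantelli lemma, applied to $\sum_q|A_q|<\infty$, then gives $|\limsup_q A_q|=0$. Taking the countable union over $n$ yields $|E_{M,\tau}|=0$, so $E$ has full measure.

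I do not expect a real obstacle. The one step needing care is this reduction from pairs to denominators, together with making the counts of $p$ uniform on the window. The reduction to countably many $(M,\tau)$ through monotonicity is routine.
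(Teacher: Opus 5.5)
Your proof is correct and complete. The paper states this theorem without proof, as a standard fact used in \cite{hrabski2021energy} (it is the convergence half of Khintchine's theorem), so there is no argument in the paper to compare against; your Borel--Cantelli argument is the usual proof behind that citation.

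Each step checks out:
\begin{enumerate}
\item The monotone reduction to the countable family $(M,1/j)$ is exactly what makes $E$ independent of $(C,\tau)$.
\item The window count of at most $(2M+1)q+1$ admissible $p$ gives $|A_q|\lesssim_M q^{-1-\tau}$, which is summable.
\item The bound of at most $2M+1$ values of $p$ per $q$ correctly converts infinitely many pairs into $\alpha\in A_q$ for infinitely many $q$, so Borel--Cantelli applies.
\end{enumerate}

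You count all pairs, not just reduced fractions. This is what the statement requires, and it correctly places every rational $\alpha=p_0/q_0$ in the null exceptional set, since $(kp_0,kq_0)$, $k\ge 1$, give infinitely many solutions.
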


This motivates the comparison between rational and irrational geometries in our experiments: following \cite{hrabski2021energy}, we use the rational torus $\omega^2=1$ as a benchmark and the irrational torus $\omega^2=\sqrt{2}$ as its 
irrational counterpart.

\section{Operator Learning}\label{sec: oplearn}

Let \(\Omega \subset \mathbb{R}^d\) be a bounded spatial domain. We denote by
\(\mathcal{A}=\mathcal{A}(\Omega;\, \mathbb{R}^{d_a})\) the space of admissible input functions  and by
\(\mathcal{U}=\mathcal{U}(\Omega;\, \mathbb{R}^{d_u})\) the space of output solution functions, taking values in $\mathbb{R}^{d_a}$ and $\mathbb{R}^{d_u}$ respectively. In this formulation, the object of interest is a map between function spaces, $\mathcal{J}^\dagger:\mathcal{A}\to \mathcal{U},$ which is typically nonlinear \cite{Kovachki2023NeuralOperator,Lu2021DeepONet}. Here \(a\in \mathcal{A}\) may represent an initial condition, coefficient field, forcing term, or other functional parameter, while \(u=\mathcal{J}^\dagger(a)\) denotes the corresponding solution.

Given observations $ \{(a_j,u_j)\}_{j=1}^N,$ with \(a_j\) sampled from a probability measure \(\nu\) on \(\mathcal{A}\), we aim to construct a parameterized approximation $\mathcal{J}_\theta:\mathcal{A}\to \mathcal{U},\,  \theta\in \Theta,$ for some finite-dimensional parameter space $\Theta$, such that \(\mathcal{J}_\theta(a)\approx \mathcal{J}^\dagger(a)\) for inputs
drawn from the same distribution. Formally, one seeks parameters minimizing an
expected discrepancy,
\[
    \min_{\theta\in\Theta}
    \mathbb{E}_{a\sim \nu}
    \left[
        \mathcal{C}\bigl(\mathcal{J}_\theta(a),\mathcal{J}^\dagger(a)\bigr)
    \right],
\]
where \(\mathcal{C}:\mathcal{U}\times \mathcal{U}\to \mathbb{R}\) is a chosen
cost functional. In practice, this expected loss is replaced by an empirical
loss over the available training data, for example
\[
    \min_{\theta\in\Theta}
    \frac{1}{N}\sum_{j=1}^N
    \mathcal{C}\bigl(\mathcal{J}_\theta(a_j),u_j\bigr).
\]
Although the mathematical formulation is posed over function spaces, the training data are observed on finite discretizations. If $\Omega_j=\{x_1,\dots,x_n\}\subset \Omega$ is a set of sampling points, then one has access to arrays $a_j|_{\Omega_j}\in \mathbb{R}^{n\times d_a}, \,  u_j|_{\Omega_j}\in \mathbb{R}^{n\times d_u}.$ The purpose of a neural operator is to learn a representation of \(\mathcal{J}^\dagger\) that is not tied to one particular discretization. Thus, the same learned parameters should define a mapping between functions, while their finite-dimensional realization may be evaluated on different grids, provided the relevant features of the functions are resolved \cite{Li2020GraphKernel,Li2021FNO,Kovachki2023NeuralOperator}.

\subsection{Fourier Neural Operator}

The Fourier neural operator constructs \(\mathcal{J}_\theta\) by combining local
pointwise transformations with nonlocal transformations in Fourier space. The input function is first lifted to a higher-dimensional channel space through a shallow fully connected neural network $ P:\mathbb{R}^{d_a}\to \mathbb{R}^{d_v}$, formally written as $v_0(x)=P(a(x))$. Here \(d_v\) is the width of the hidden representation. The lifted function is
then passed through a sequence of operator layers $v_0 \mapsto v_1 \mapsto \cdots \mapsto v_T,$ and the final representation is projected back to the target dimension by $u(x)=Q(v_T(x)), \, Q:\mathbb{R}^{d_v}\to \mathbb{R}^{d_u}.$ Equivalently, the full architecture may be written as
\begin{align}
    \mathcal{J}_\theta
    =
    Q\circ \mathcal{L}_T\circ \cdots \circ \mathcal{L}_1\circ P.
\end{align}
where \(\mathcal{L}_1,\ldots,\mathcal{L}_T\) denote the Fourier layers. Each layer updates the hidden function by
\begin{align}\label{iter}
    v_{t+1}(x)
    =
    \sigma
    \left(
        W v_t(x)
        +
        (\mathcal{K}(a;\,\phi)v_t)(x)
    \right),
    \quad
    t=0,\dots,T-1,
\end{align}
where \(\sigma: \mathbb{R}\to \mathbb{R}\) is a nonlinear activation function, \(W\) is the linear transformation, and
\(\mathcal{K}(a;\,\phi)\) is a kernel integral operator that is parameterized by $\phi$, written as
\begin{align}\label{kintop}
      (\mathcal{K}(a;\,\phi)v_t)(x)
    =
    \int_\Omega
    \kappa(x,y,a(x),a(y);\,\phi)v_t(y)\,dy.
\end{align}
Here, \(\kappa(\cdot;\,\phi)\) is a learned kernel function. If the kernel depends only on the relative displacement, so that $\kappa(x,y,a(x),a(y);\,\phi)=\kappa(x-y;\,\phi)$, then the kernel integral operator \eqref{kintop} becomes a convolution. By the convolution theorem,
\begin{align*}
    (\mathcal{K}(\phi)v_t)(x)
    =
    \mathcal{F}^{-1}
    \left(
        \mathcal{F}(\kappa_\phi)\,\cdot\mathcal{F}(v_t)
    \right)(x),
\end{align*}
or equivalently written as 
\begin{align}
    (\mathcal{K}(\phi)v_t)(x)
    =
    \mathcal{F}^{-1}
    \left(
        R_\phi \cdot \mathcal{F}(v_t)
    \right)(x),
\end{align}
where \(R_\phi\) represents the Fourier transform of the periodic function $\kappa$, which can be taken via the scalar convolution kernel. Thus, equation \eqref{iter} becomes,
\[
    v_{t+1}(x)
    =
    \sigma
    \left(
        W v_t(x)
        +
        \mathcal{F}^{-1}
        \left(
            R_\phi\cdot \mathcal{F}(v_t)
        \right)(x)
    \right),
\]
and the final output is obtained by applying the shallow fully connected layer $Q(\cdot)$ locally to $v_T$, giving 
$ u(x) = Q(v_T(x)).$ 

In the FNO architecture, the lifting layer \(P\) embeds the input field into a higher-dimensional latent representation, while the projection layer \(Q\) maps this representation back to the desired output dimension. Given an input function \(a(x)\in \mathbb{R}^{B\times H\times W\times C_1}\), the network learns an operator \(\mathcal J_{\theta}: \mathcal{A}\longmapsto \mathcal{U}\) such that \(u(x)\in \mathbb{R}^{B\times H\times W\times C_2}\), where \(B\) is the batch size, \(H\) and \(W\) are the spatial grid dimensions (in particular, the height and width respectively), and \(C_1\) and \(C_2\) denote the input and output channel dimensions. The main expressive component of the FNO is the Fourier layer, which updates the latent representation in spectral space and allows the model to capture global spatial interactions and complex multiscale features efficiently.

\begin{figure}[H]
    \centering
    \includegraphics[width=0.7\linewidth]{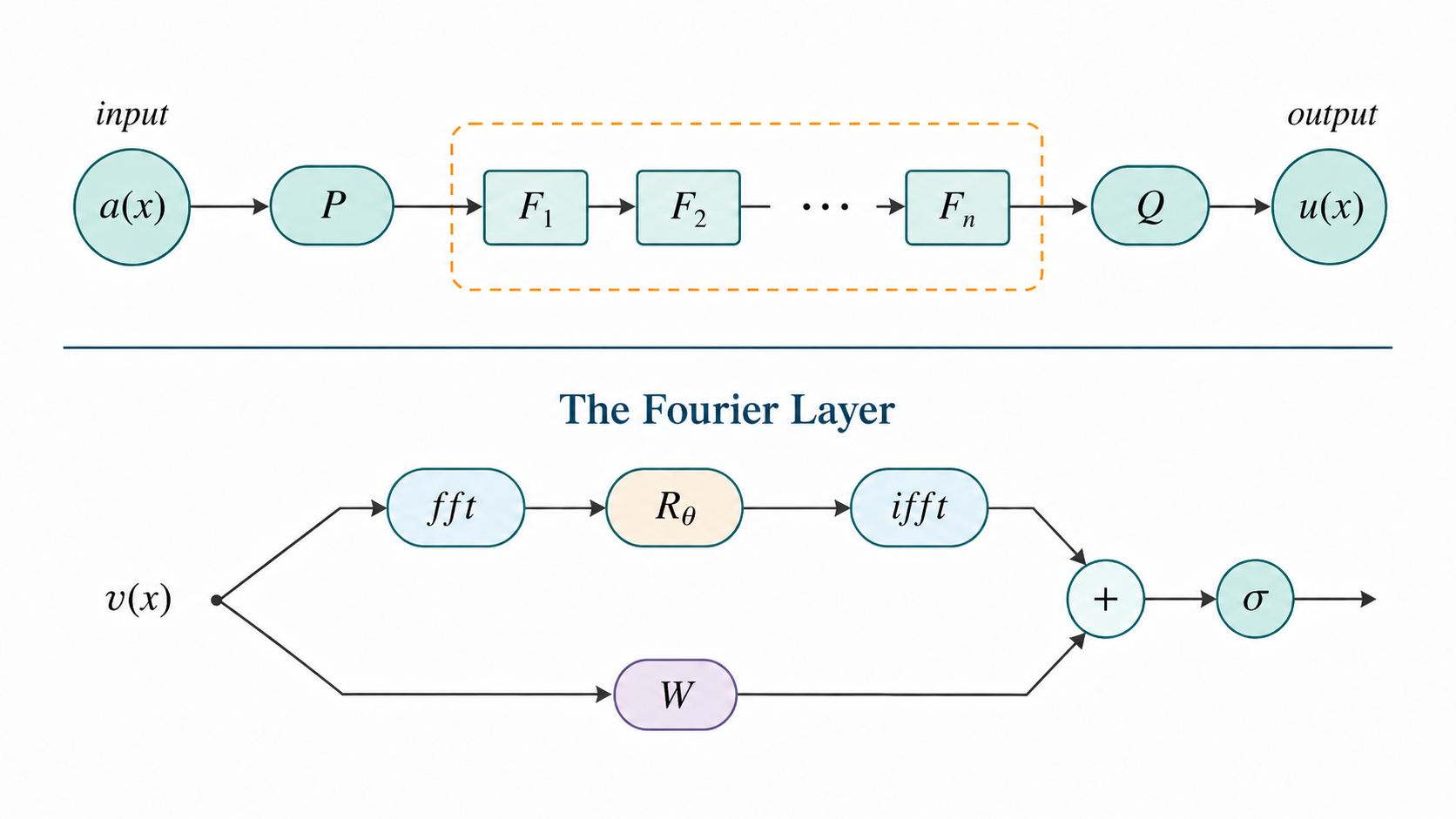}
    \caption{Schematic of the FNO architecture. The input function $a(x)$ is lifted by the shallow fully connected network $P$, propagated through a sequence of Fourier layers $F_1,\ldots,F_n$, and projected by $Q$ to produce the output $u(x)$. The bottom image shows the structure of a Fourier layer, where $v(x)$ is updated by combining the spectral convolution $\mathcal{F}^{-1}(R_\theta \cdot \mathcal{F}v)$ with the pointwise linear map $Wv$, followed by the activation function $\sigma$.}
\label{fig:fno_architecture}
\end{figure}
We now state an a posteriori error estimate for the learned solution. The estimate is formulated at the level of the Fourier coefficients used in any given reference solver. Thus, the finite set $\Lambda_k$ specifies the retained Fourier modes, and the nonlinear interaction is restricted to those modes. 

The residual \eqref{fnores} measures how well the learned Fourier coefficients satisfy the same truncated modal system as the reference solution. A small residual,
together with a small initial mismatch, implies that the learned solution remains close to the reference solution on the time interval \([0,T]\).

\begin{theorem}[Error estimate]\label{thm3.1}
Suppose $\omega,T>0$, $s>1$, and let 
$\Lambda_K=\{k=(m,\ell)\in\mathbb{Z}^2:\ |m|,|\ell|\le K\}$. For 
$k=(m,\ell)\in\mathbb{Z}^2$, define $\lambda_k=m^2+\omega^2\ell^2$. Let $\psi$ be a 
solution to \eqref{eq:nls-ivp} on $\mathbb{T}^2$ with Fourier coefficients 
$\widehat{\psi}_k$ satisfying the full mode equation
\begin{align}\label{nlsmode}
    i\partial_t\widehat{\psi}_k-\lambda_k\widehat{\psi}_k
    =\sum_{\substack{k_1-k_2+k_3=k\\ k_j\in\mathbb{Z}^2}}
    \widehat{\psi}_{k_1}\overline{\widehat{\psi}_{k_2}}\widehat{\psi}_{k_3},
    \qquad k\in\mathbb{Z}^2.
\end{align}
Let $\psi^\theta$ be the learned solution to the frequency-restricted equation
\begin{align}\label{eq:filtered-nls}
\begin{cases}
    i\partial_t \psi^\theta + \Delta_\omega \psi^\theta 
    = P_{\le K}\bigl(|\psi^\theta|^2\psi^\theta\bigr),
    & (t,x,y)\in[0,T]\times\mathbb{T}^2,\\[2pt]
    \psi^\theta(0,\cdot)=\psi^\theta_0\in P_{\le K}H^s(\mathbb{T}^2),
\end{cases}
\end{align}
with Fourier coefficients $\widehat{\psi}^{\,\theta}_k$, $k\in\Lambda_K$, and define 
the residual $r_k^\theta(t)$ by
\begin{align}\label{fnores}
    r_k^\theta(t)
    =
    i\partial_t\widehat{\psi}^{\,\theta}_k(t)
    -\lambda_k\widehat{\psi}^{\,\theta}_k(t)
    -\sum_{\substack{k_1-k_2+k_3=k\\ k_j\in\Lambda_K}}
    \widehat{\psi}^{\,\theta}_{k_1}\overline{\widehat{\psi}^{\,\theta}_{k_2}}\widehat{\psi}^{\,\theta}_{k_3},
    \qquad k\in\Lambda_K.
\end{align}
Assume the uniform bound
\begin{align}\label{mequ}
    \sup_{0\le t\le T}
    \Bigl(\|\psi(t)\|_{H^{s+1}}+\|\psi^\theta(t)\|_{H^s}\Bigr)\le M.
\end{align}
Then there exists a constant $C_s>0$, depending only on $s$, such that
\begin{align}\label{eq:err}
    \sup_{0\le t\le T}\|\psi(t)-\psi^\theta(t)\|_{H^s}
    \le
    \exp\bigl(C_sM^2T\bigr)
    \left[
        \|\psi(0)-\psi^\theta(0)\|_{H^s}
        +\int_0^T\|r^\theta(\tau)\|_{h^s}\,d\tau
        +\frac{C_sM^3T}{K}
    \right].
\end{align}
\end{theorem}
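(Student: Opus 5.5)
We argue by a Gronwall estimate for the difference $e:=\psi-\psi^\theta$ in $H^s$. We split $\psi=P_{\le K}\psi+P_{>K}\psi$, where $P_{\le K}$ is the projection onto $\Lambda_K$ from Lemma~\ref{lem:bernstein}. The high-frequency tail is controlled directly by \eqref{eq:bernstein-high} with $s'=s+1$:
\begin{align}
\|P_{>K}\psi(t)\|_{H^s}\le \langle K\rangle^{-1}\|\psi(t)\|_{H^{s+1}}\le M/K .
\end{align}
It therefore suffices to estimate $w:=P_{\le K}\psi-\psi^\theta$, which lives on $\Lambda_K$. We absorb the term $M/K$ into the final bound; this uses $C_sM^2T\ge$ some normalisation, or we simply keep it as an additive term. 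If necessary we enlarge $C_s$ and assume $M\gtrsim 1$; otherwise we note that $M/K\le e^{C_sM^2T}C_sM^3T/K$ fails only in degenerate regimes, which we handle by tracking constants.

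\textbf{Equation for $w$.} For $k\in\Lambda_K$ we subtract \eqref{fnores} from \eqref{nlsmode}:
\begin{align}
i\partial_t\widehat w_k-\lambda_k\widehat w_k = \Big(\sum_{k_1-k_2+k_3=k}\widehat\psi_{k_1}\overline{\widehat\psi_{k_2}}\widehat\psi_{k_3}-\sum_{k_j\in\Lambda_K}\widehat\psi^\theta_{k_1}\overline{\widehat\psi^\theta_{k_2}}\widehat\psi^\theta_{k_3}\Big)-r^\theta_k .
\end{align}
We write the bracket as $A_k+B_k$, where
\begin{align}
A_k=\widehat{\big(P_{\le K}(|\psi|^2\psi)-P_{\le K}(|P_{\le K}\psi|^2P_{\le K}\psi)\big)}_k,
\end{align}
and $B_k$ is the difference of the truncated trilinear sums with $P_{\le K}\psi$ versus $\psi^\theta$.

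\textbf{Energy estimate.} Multiplying by $\langle k\rangle^{2s}\overline{\widehat w_k}$, summing, and taking imaginary parts, the linear term $\lambda_k|\widehat w_k|^2$ is real and drops out. This leaves
\begin{align}
\tfrac{d}{dt}\|w\|_{h^s}\le \|A\|_{h^s}+\|B\|_{h^s}+\|r^\theta\|_{h^s}.
\end{align}

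\textbf{Bounding $B$.} The truncated trilinear sum is $P_{\le K}$ of the product of trigonometric polynomials. Hence $B$ is $P_{\le K}$ applied to a cubic difference $f\bar f f-g\bar g g$, which factors as a sum of three products each containing $w$ once. By Lemma~\ref{lem:algebra} and the fact that $P_{\le K}$ is nonexpansive,
\begin{align}
\|B\|_{h^s}\le C_s\big(\|P_{\le K}\psi\|_{H^s}^2+\|\psi^\theta\|_{H^s}^2\big)\|w\|_{H^s}\le C_sM^2\|w\|_{H^s}.
\end{align}

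\textbf{Bounding $A$.} Similarly, $A$ is a cubic difference between $\psi$ and $P_{\le K}\psi$, each term containing one factor of $P_{>K}\psi$. The algebra property and the tail bound give
\begin{align}
\|A\|_{h^s}\le C_sM^2\cdot M/K .
\end{align}

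\textbf{Conclusion.} Gronwall yields
\begin{align}
\|w(t)\|_{H^s}\le e^{C_sM^2T}\Big[\|w(0)\|_{H^s}+\int_0^T\|r^\theta\|_{h^s}\,d\tau+C_sM^3T/K\Big].
\end{align}
We have $\|w(0)\|_{H^s}\le\|e(0)\|_{H^s}$ because $\psi^\theta_0$ is supported on $\Lambda_K$, so $w(0)=P_{\le K}e(0)$. Combining with the tail bound gives \eqref{eq:err}, up to the extra $M/K$ term.

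\textbf{Main obstacle.} The main obstacle is this additive $M/K$ tail contribution, which does not visibly appear in \eqref{eq:err}. We would try first to absorb it by estimating $e$ directly rather than $w$. This requires handling the high modes of $e$, which satisfy the linear flow plus $P_{>K}(|\psi|^2\psi)$. Alternatively, we use $1\le e^{C_sM^2T}$ and enlarge the constant, which requires $C_sM^2T\gtrsim1$ or a separate low-$M$ argument. A secondary technical point is that the derivative of $\|w\|_{h^s}$ at zero should be handled via $\tfrac{d}{dt}\|w\|^2$ together with the usual $\sqrt{\epsilon+\|w\|^2}$ regularisation.
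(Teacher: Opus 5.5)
Your estimate for $w=P_{\le K}\psi-\psi^\theta$ is essentially the paper's proof. There $\widehat e_k$ is defined only for $k\in\Lambda_K$, so it is your $w$, and the truncation term $a_k$ is your $A_k$. The telescoping, algebra-property, Bernstein and Gr\"onwall steps are the same.

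The paper then finishes by asserting $\|\widehat e\|_{h^s}=\|\psi-\psi^\theta\|_{H^s}$. This is not correct: $\|\widehat e\|_{h^s}=\|P_{\le K}(\psi-\psi^\theta)\|_{H^s}$, and the tail $P_{>K}(\psi-\psi^\theta)=P_{>K}\psi$ is never estimated. So you have found exactly the step the paper glosses over.

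Your proposal does not close that gap either. As written you obtain \eqref{eq:err} only up to an additive $M/K$, and the absorption route you offer fails. It would need $M/K\le e^{C_sM^2T}C_sM^3T/K$, i.e.\ $C_sM^2T\,e^{C_sM^2T}\ge 1$. That fails for short times or small data, which is not a degenerate regime. No constant depending only on $s$ can repair it, since the right-hand side tends to $0$ as $T\to0$ with $M,K$ fixed. The culprit is the crude bound $\|P_{>K}\psi(t)\|_{H^s}\le M/K$. It ignores that $\|P_{>K}\psi(0)\|_{H^s}$ is already contained in $\|e(0)\|_{H^s}$, and that the tail can only change at rate $O(M^3/K)$.

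Your other suggestion, estimating $e$ directly, is the right fix and takes a few lines. Because $\psi^\theta$ is supported in $\Lambda_K$, you have $\|e\|_{H^s}^2=\|w\|_{H^s}^2+\|P_{>K}\psi\|_{H^s}^2$. For $k\notin\Lambda_K$ the full mode equation reads $i\partial_t\widehat\psi_k-\lambda_k\widehat\psi_k=\widehat{(|\psi|^2\psi)}_k$, and the $\lambda_k$ term drops out of the energy identity as before. Then \eqref{eq:bernstein-high} with $s'=s+1$ and Lemma~\ref{lem:algebra} at level $s+1$ give
\[
\tfrac12\tfrac{d}{dt}\|P_{>K}\psi\|_{H^s}^2\le\|P_{>K}(|\psi|^2\psi)\|_{H^s}\,\|P_{>K}\psi\|_{H^s}\le\frac{C_{s+1}M^3}{K}\,\|e\|_{H^s}.
\]
This is where the $H^{s+1}$ part of \eqref{mequ} is genuinely needed. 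Add this to your bound $\tfrac12\tfrac{d}{dt}\|w\|_{H^s}^2\le(\|A\|_{h^s}+\|B\|_{h^s}+\|r^\theta\|_{h^s})\|w\|_{H^s}$, using $\|w\|_{H^s}\le\|e\|_{H^s}$. This yields
\[
\tfrac{d}{dt}\|e\|_{H^s}\le C_sM^2\|e\|_{H^s}+(C_s+C_{s+1})M^3/K+\|r^\theta\|_{h^s}.
\]
Gr\"onwall then gives exactly \eqref{eq:err}, with $\|e(0)\|_{H^s}=\|\psi(0)-\psi^\theta(0)\|_{H^s}$ and no leftover term. Running the energy estimate on $\|e\|_{H^s}^2$, rather than adding a separately propagated tail by the triangle inequality, also avoids a spurious factor $\sqrt2$ from $\|w(0)\|_{H^s}+\|P_{>K}\psi(0)\|_{H^s}\le\sqrt2\,\|e(0)\|_{H^s}$.
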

\begin{proof}
Define the truncated cubic convolution operator on the Fourier side,
\begin{align}
    \mathcal{N}(z)_k
    =
    \sum_{\substack{k_1-k_2+k_3=k\\ k_j\in\Lambda_K}}
    z_{k_1}\overline{z_{k_2}}z_{k_3},
    \qquad k\in\Lambda_K,
\end{align}
which restricts \emph{all} frequencies $k,k_1,k_2,k_3$ to $\Lambda_K$. For a sequence 
$z$ with frequencies restricted to $\Lambda_K$, the input restriction is automatic, 
and $\mathcal{N}(z)$ coincides with the projection onto $\Lambda_K$ of the cubic 
coefficient sequence of $u$, where $\widehat{u}=z$; equivalently
\begin{align}\label{eq:N-identity}
    \mathcal{N}(z)=P_{\le K}\bigl(|P_{\le K}u|^2\,P_{\le K}u\bigr).
\end{align}
In particular, $\mathcal{N}(\widehat{\psi}^{\,\theta})=P_{\le K}(|\psi^\theta|^2\psi^\theta)$ 
since $\psi^\theta$ has frequencies restricted to $\Lambda_K$, whereas for the full 
solution $\psi$ \eqref{eq:N-identity} gives 
$\mathcal{N}(\widehat{\psi})=P_{\le K}(|P_{\le K}\psi|^2 P_{\le K}\psi)$, which uses 
only the low modes of $\psi$.

\medskip
\noindent\textbf{Error equation.}
Let $\widehat{e}_k(t)=\widehat{\psi}_k(t)-\widehat{\psi}^{\,\theta}_k(t)$ for 
$k\in\Lambda_K$. The exact coefficients satisfy the full mode equation 
\eqref{nlsmode} with $k_j\in\mathbb{Z}^2$; separating the low-input triples from those 
reaching outside $\Lambda_K$, for $k\in\Lambda_K$,
\begin{align}\label{eq:psi-split}
    i\partial_t\widehat{\psi}_k-\lambda_k\widehat{\psi}_k
    =\mathcal{N}(\widehat{\psi})_k+a_k,
    \qquad
    a_k:=\sum_{\substack{k_1-k_2+k_3=k\\ \text{some }k_j\notin\Lambda_K}}
    \widehat{\psi}_{k_1}\overline{\widehat{\psi}_{k_2}}\widehat{\psi}_{k_3},
\end{align}
where $a=\{a_k\}_{k\in\Lambda_K}$ is the truncation term. The learned coefficients 
satisfy \eqref{fnores}, that is 
\begin{align}
    i\partial_t\widehat{\psi}^{\,\theta}_k-\lambda_k\widehat{\psi}^{\,\theta}_k
=\mathcal{N}(\widehat{\psi}^{\,\theta})_k+r_k^\theta .
\end{align}
Subtracting,
\begin{align}\label{eq:error-ode}
    \partial_t\widehat{e}_k
    =-i\lambda_k\widehat{e}_k
    -i\bigl(\mathcal{N}(\widehat{\psi})_k-\mathcal{N}(\widehat{\psi}^{\,\theta})_k\bigr)
    -i\,a_k+i\,r_k^\theta,
    \qquad k\in\Lambda_K.
\end{align}

\medskip
\noindent\textbf{Energy estimate.}
Differentiating the squared norm and substituting \eqref{eq:error-ode},
\begin{align}
    \frac{1}{2}\frac{d}{dt}\|\widehat{e}(t)\|_{h^s}^2
    &=
    \re\sum_{k\in\Lambda_K}\langle k\rangle^{2s}\,\partial_t\widehat{e}_k\,\overline{\widehat{e}_k}
    \notag\\
    &=
    \underbrace{\re\sum_{k\in\Lambda_K}-i\lambda_k\langle k\rangle^{2s}|\widehat{e}_k|^2}_{=\,0}
    -\re\sum_{k\in\Lambda_K}i\langle k\rangle^{2s}
        \bigl(\mathcal{N}(\widehat{\psi})_k-\mathcal{N}(\widehat{\psi}^{\,\theta})_k\bigr)\overline{\widehat{e}_k}
    \notag\\
    &\quad
    -\re\sum_{k\in\Lambda_K}i\langle k\rangle^{2s}\,a_k\,\overline{\widehat{e}_k}
    +\re\sum_{k\in\Lambda_K}i\langle k\rangle^{2s}\,r_k^\theta\,\overline{\widehat{e}_k}.
\end{align}
The first sum vanishes since $\lambda_k\in\mathbb{R}$ makes each summand purely 
imaginary; this is conservation of the $h^s$ norm under the linear flow. Splitting 
the weight as $\langle k\rangle^{2s}=\langle k\rangle^{s}\cdot\langle k\rangle^{s}$ 
and applying the Cauchy--Schwarz inequality to each remaining sum,
\begin{align}\label{eq:ebound}
    \frac{1}{2}\frac{d}{dt}\|\widehat{e}(t)\|_{h^s}^2
    \le
    \Bigl(
        \bigl\|\mathcal{N}(\widehat{\psi})-\mathcal{N}(\widehat{\psi}^{\,\theta})\bigr\|_{h^s}
        +\|a(t)\|_{h^s}
        +\|r^\theta(t)\|_{h^s}
    \Bigr)\,\|\widehat{e}(t)\|_{h^s}.
\end{align}

\medskip
\noindent\textbf{Nonlinear difference.}
Using \eqref{eq:N-identity} for both arguments and 
$\mathcal{N}(\widehat{\psi}^{\,\theta})=P_{\le K}(|\psi^\theta|^2\psi^\theta)$,
\begin{align}\label{eq:transfer}
    \bigl\|\mathcal{N}(\widehat{\psi})-\mathcal{N}(\widehat{\psi}^{\,\theta})\bigr\|_{h^s}
    &=
    \bigl\|P_{\le K}\bigl(|P_{\le K}\psi|^2 P_{\le K}\psi-|\psi^\theta|^2\psi^\theta\bigr)\bigr\|_{H^s}
    \notag\\
    &\le
    \bigl\||P_{\le K}\psi|^2 P_{\le K}\psi-|\psi^\theta|^2\psi^\theta\bigr\|_{H^s},
\end{align}
since $P_{\le K}$ does not increase the $H^s$ norm. Writing 
$\tilde{e}:=P_{\le K}\psi-\psi^\theta$ and telescoping the cubic difference into 
three terms, each linear in $\tilde{e}$,
\begin{align}\label{eq:telescope}
    |P_{\le K}\psi|^2 P_{\le K}\psi-|\psi^\theta|^2\psi^\theta
    =
    \tilde{e}\,\overline{P_{\le K}\psi}\,P_{\le K}\psi
    +\psi^\theta\,\overline{\tilde{e}}\,P_{\le K}\psi
    +\psi^\theta\,\overline{\psi^\theta}\,\tilde{e}.
\end{align}
Applying the algebra property (Lemma~\ref{lem:algebra}) to each term and combining 
with \eqref{eq:transfer},
\begin{align}
    \bigl\|\mathcal{N}(\widehat{\psi})-\mathcal{N}(\widehat{\psi}^{\,\theta})\bigr\|_{h^s}
    &\le
    \|\tilde{e}\,\overline{P_{\le K}\psi}\,P_{\le K}\psi\|_{H^s}
    +\|\psi^\theta\,\overline{\tilde{e}}\,P_{\le K}\psi\|_{H^s}
    +\|\psi^\theta\,\overline{\psi^\theta}\,\tilde{e}\|_{H^s}
    \notag\\
    &\le
    C_s\Bigl(\|P_{\le K}\psi\|_{H^s}^2
    +\|P_{\le K}\psi\|_{H^s}\|\psi^\theta\|_{H^s}
    +\|\psi^\theta\|_{H^s}^2\Bigr)\|\tilde{e}\|_{H^s}.
\end{align}
Since $P_{\le K}$ is a contraction on $H^s$, $\|P_{\le K}\psi\|_{H^s}\le\|\psi\|_{H^s}$, 
and moreover 
\begin{align}
    \tilde{e}=P_{\le K}\psi-\psi^\theta=P_{\le K}(\psi-\psi^\theta)
\end{align}
has Fourier support in $\Lambda_K$ with $\|\tilde{e}\|_{H^s}=\|\widehat{e}\|_{h^s}$. Using 
$\|P_{\le K}\psi\|_{H^s}\|\psi^\theta\|_{H^s}\le\tfrac12(\|\psi\|_{H^s}^2+\|\psi^\theta\|_{H^s}^2)$, 
absorbing constants into a new $C_s>0$, and invoking \eqref{mequ},
\begin{align}\label{eq:nonlin-bound}
    \bigl\|\mathcal{N}(\widehat{\psi})-\mathcal{N}(\widehat{\psi}^{\,\theta})\bigr\|_{h^s}
    \le
    C_s\bigl(\|\psi(t)\|_{H^s}+\|\psi^\theta(t)\|_{H^s}\bigr)^2\|\widehat{e}(t)\|_{h^s}
    \le
    C_sM^2\|\widehat{e}(t)\|_{h^s}.
\end{align}

\medskip
\noindent\textbf{Truncation term.}
Each triple defining $a_k$ in \eqref{eq:psi-split} has at least one factor indexed 
outside $\Lambda_K$, i.e.\ a factor of $P_{>K}\psi$. By the algebra property 
(Lemma~\ref{lem:algebra}) and then the Bernstein inequality 
(Lemma~\ref{lem:bernstein}),
\begin{align}\label{eq:alias-bound}
    \|a(t)\|_{h^s}
    \le
    C_s\,\|P_{>K}\psi(t)\|_{H^s}\,\|\psi(t)\|_{H^s}^2
    \le
    \frac{C_s}{K}\,\|\psi(t)\|_{H^{s+1}}\,\|\psi(t)\|_{H^s}^2
    \le
    \frac{C_sM^3}{K}.
\end{align}

\medskip
\noindent\textbf{Differential inequality and Grönwall.}
Inserting \eqref{eq:nonlin-bound} and \eqref{eq:alias-bound} into \eqref{eq:ebound} 
and dividing by $\|\widehat{e}(t)\|_{h^s}$ where positive,
\begin{align}
    \frac{d}{dt}\|\widehat{e}(t)\|_{h^s}
    \le
    C_sM^2\|\widehat{e}(t)\|_{h^s}
    +\frac{C_sM^3}{K}
    +\|r^\theta(t)\|_{h^s}.
\end{align}
Grönwall's inequality gives, for every $t\in[0,T]$,
\begin{align}
    \|\widehat{e}(t)\|_{h^s}
    \le
    \exp(C_sM^2 t)
    \left[
        \|\widehat{e}(0)\|_{h^s}
        +\int_0^t\Bigl(\|r^\theta(\tau)\|_{h^s}+\frac{C_sM^3}{K}\Bigr)d\tau
    \right].
\end{align}
Since $e=\psi-\psi^\theta$ and $\|\cdot\|_{h^s}=\|\cdot\|_{H^s}$ under the Fourier 
identification, taking the supremum over $0\le t\le T$ yields
\begin{align}
    \sup_{0\le t\le T}\|\psi(t)-\psi^\theta(t)\|_{H^s}
    \le
    \exp(C_sM^2T)
    \left[
        \|\psi(0)-\psi^\theta(0)\|_{H^s}
        +\int_0^T\|r^\theta(\tau)\|_{h^s}\,d\tau
        +\frac{C_sM^3T}{K}
    \right],
\end{align}
which proves the result.
\end{proof}

\begin{remark}
The assumption 
\begin{align}
    \sup_{t\in[0,T]}\|\psi(t)\|_{H^{s+1}}\le M
\end{align}
is not vacuous. On a generic irrational torus, Theorem~\ref{thm1.1} and 
Corollary~\ref{cor1.2} give polynomial-in-time control of high Sobolev norms for data 
of bounded frequency support, so $M$ may be taken finite (at most linearly growing in 
$T$ for $s\ge3$). In the rational case, \cite[Theorem~1.5]{HK26} furnishes a 
logarithmically-corrected linear approximation on the timescale $[0,T/\log N)$, which 
likewise yields the required norm control there.
\end{remark}

\begin{remark}[Decomposition of the total error]
The above estimate controls the error between the learned solution
\(\psi_\theta\) and the reference solution \(\psi\). If \(\psi^{\mathrm{ex}}\) denotes the exact continuum solution
of \eqref{eq:nls-ivp}, then, for each \(0\le t\le T\),
\[
    \|\psi^{\mathrm{ex}}(t)-\psi^\theta(t)\|_{H^s}
    \le
    \|\psi^{\mathrm{ex}}(t)-\psi(t)\|_{H^s}
    +
    \|\psi(t)-\psi^\theta(t)\|_{H^s} .
\]
Consequently,
\[
    \sup_{0\le t\le T}\|\psi^{\mathrm{ex}}(t)-\psi^\theta(t)\|_{H^s}
    \le
    \sup_{0\le t\le T}\|\psi^{\mathrm{ex}}(t)-\psi(t)\|_{H^s}
    +
   \sup_{0\le t\le T} \|\psi(t)-\psi^\theta(t)\|_{H^s} .
\]
Thus, the total error relative to the continuum solution is decomposed into the reference-solver error and the learned-solution error controlled by the
preceding theorem. In our computations, \(\psi\) is generated by a Fourier pseudo-spectral reference solver. Convergence results for conservative
Fourier pseudo-spectral methods for two-dimensional nonlinear Schrödinger equations are available in \citep{GongWangWangCai2017ConservativeFPSNLS}. The precise rate depends on the spatial truncation, the time integrator, the time step, and the regularity of the exact solution.
\end{remark}

\section{Numerical Experiments}\label{sec: experiment}
In this section, we present numerical experiments for the two-dimensional cubic defocusing NLS on rational and irrational tori. We first describe the data generation procedure, computational setup, model training, and evaluation metrics. We then compare the learned and ground truth solutions, with particular attention to the growth of Sobolev norms, which provides the main numerical evidence supporting the theoretical discussion in Section~\ref{sec: theory}. Finally, we include ablation studies to examine how architectural choices affect the accuracy of the learned dynamics.
\begin{figure}[H]
    \centering
     \includegraphics[width=.5\textwidth]{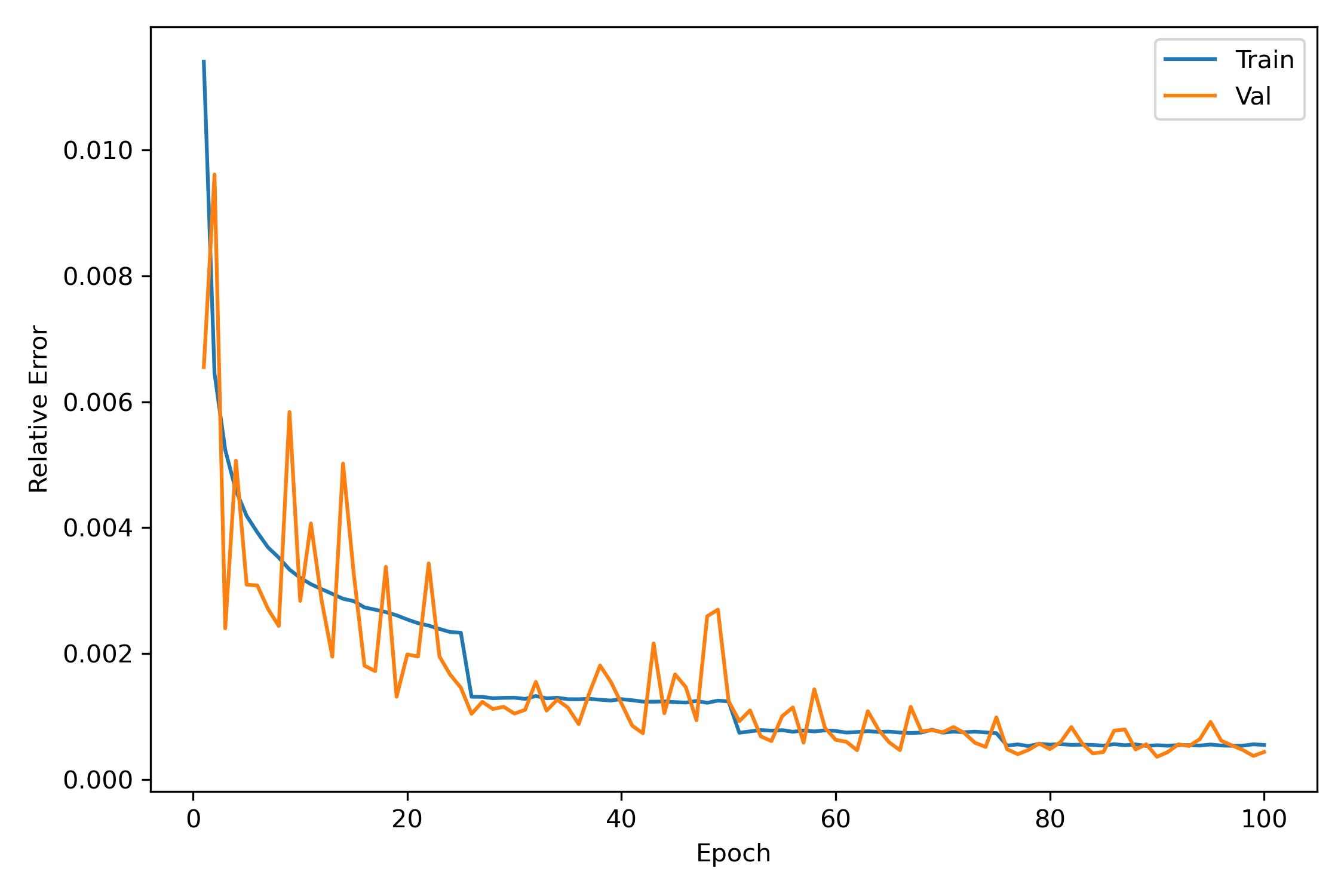}
    \caption{Training and validation relative $L_2$ error as functions of epoch}
    \label{epochlossrnn}
\end{figure}

\subsection{Data Generation and Computational Setup}

The reference data are generated on the computational domain $[0,2\pi]^2$, with the torus geometry entering through the anisotropic Fourier multiplier $\lambda_{m,\ell}=m^2+\omega^2\ell^2.$ We consider the rational case $\omega^2=1$ and the irrational case $\omega^2=\sqrt{2}$. For each realization, the initial condition is constructed as a random-phase Fourier series supported on the low-frequency set
\[
    K_0=\{(m,\ell)\in\mathbb{Z}^2:-2\le m,\ell\le 2,\ (m,\ell)\ne (0,0)\}.
\]
Specifically,
\[
    \psi_0(x,y)
    =
    \sum_{(m,\ell)\in K_0}
    Ce^{i\phi_{m,\ell}}e^{i(mx+\ell y)},
\]
where the phases, $\phi_{m,\ell}\sim\mathcal{U}[0,2\pi],$ are sampled independently. The constant $C$ is chosen so that the initial data have the prescribed anisotropic Sobolev size $ \|\psi_0\|_{H^s}=R$. As in \cite{hrabski2021energy}, we take $s=2$ and $R=1.8263$.

The reference solutions are computed using a Fourier pseudo-spectral method in space and an integrating-factor fourth-order Runge--Kutta method in time. The linear part is advanced exactly in Fourier space, whereas the cubic nonlinear term $|\psi|^2\psi$ is evaluated pseudospectrally in physical space. A two-thirds dealiasing rule \cite{Orszag1971Dealiasing} is applied to the nonlinear term. Unless stated otherwise, the reference solutions are generated on a $256\times256$ spatial grid with time step $\Delta t_{\mathrm{ref}}=2\times 10^{-3}$. These solutions are then downsampled to a $64\times64$ learning grid in which all training, validation, testing, and error computations are performed. 

The solution is stored at time intervals $\Delta t_{\mathrm{data}}=T_f/40,\, T_f = 2\pi,$ and the final time is $T_{\max}=10T_f$. Hence, each trajectory contains $400$ one-step intervals. The training data are formed from consecutive pairs of snapshots. For each time level $n$, the input consists of the real part, the imaginary part, and a constant channel containing the value of $\omega^2$,
\[
    a(x,y)=
    \left(
    \re \psi^n(x,y),
    \im \psi^n(x,y),
    \omega^2
    \right),
\]
and the target is
\[
   u(x,y)=
    \left(
    \re \psi^{n+1}(x,y),
    \im \psi^{n+1}(x,y)
    \right).
\]
Therefore, the learned map is a geometry-conditioned one-step solution operator, $ \mathcal J_{\theta}: \mathcal{A}\longmapsto \mathcal{U}$. Long-time predictions are then obtained by autoregressively feeding the predicted state back into the model.

\subsubsection{Dataset split}
The data are split according to the random initial conditions determined by the sampled phases
$\{\phi_{m,\ell}\}$. Thus, all one-step pairs from the trajectory generated by a given set of phases are assigned to only one of the training, validation, or testing sets. In our experiment, we use $1200$ such samples for each geometry ($\omega^2 \in \{1, \sqrt{2}\}$), partitioned as $800$ for training, $200$ for validation, and $200$ for testing. The model is then trained on the resulting one-step pairs from both geometries.

The real and imaginary components are standardized using statistics computed only from the training set. The $\omega^2$ channel is kept unnormalized so that the model receives the physical value of the torus parameter directly.

\subsubsection{Training and Evaluation}\label{traineval}

Our architecture uses 4 Fourier layers, width $64$, and $12$ retained Fourier modes in each spatial direction. The lifting map $P$ sends $a(x)$ to a $64$-channel latent representation. Each Fourier layer consists of a truncated spectral convolution and a pointwise $1\times 1$ convolution in physical space, followed by a GELU activation. The projection map $Q$ sends the final latent representation through $128$ channels to the two-component output $u(x)$.

Training uses Adam with initial learning rate $10^{-3}$, mini-batch size $16$, and mean squared error loss on the normalized real and imaginary components. We train for $100$ epochs and reduce the learning rate by a factor of $0.5$ every $25$ epochs. The checkpoint with the smallest validation loss is used for testing. Test performance is reported from full autoregressive rollouts on unseen realizations using relative $L^2$ error for $\psi$, comparisons of $|\psi|$, solution slices, and the Sobolev norm growth $\|\psi(t)\|_{H^2}$.  We conduct all our experiments on a single NVIDIA A40 GPU with $48$ GB memory, using $8$ CPU cores and $128$ GB RAM.

\subsection{Numerical Results}

In this section, we present the numerical results for the learned one-step solution operator on the rational and irrational tori. We include training and validation diagnostics, relative error comparisons, solution-level visualizations, and Sobolev-norm plots. 

\begin{figure}[H]
    \centering

    \begin{minipage}[c]{0.48\textwidth}
        \centering

        \begin{subfigure}[b]{\textwidth}
            \centering
            \includegraphics[width=\textwidth]{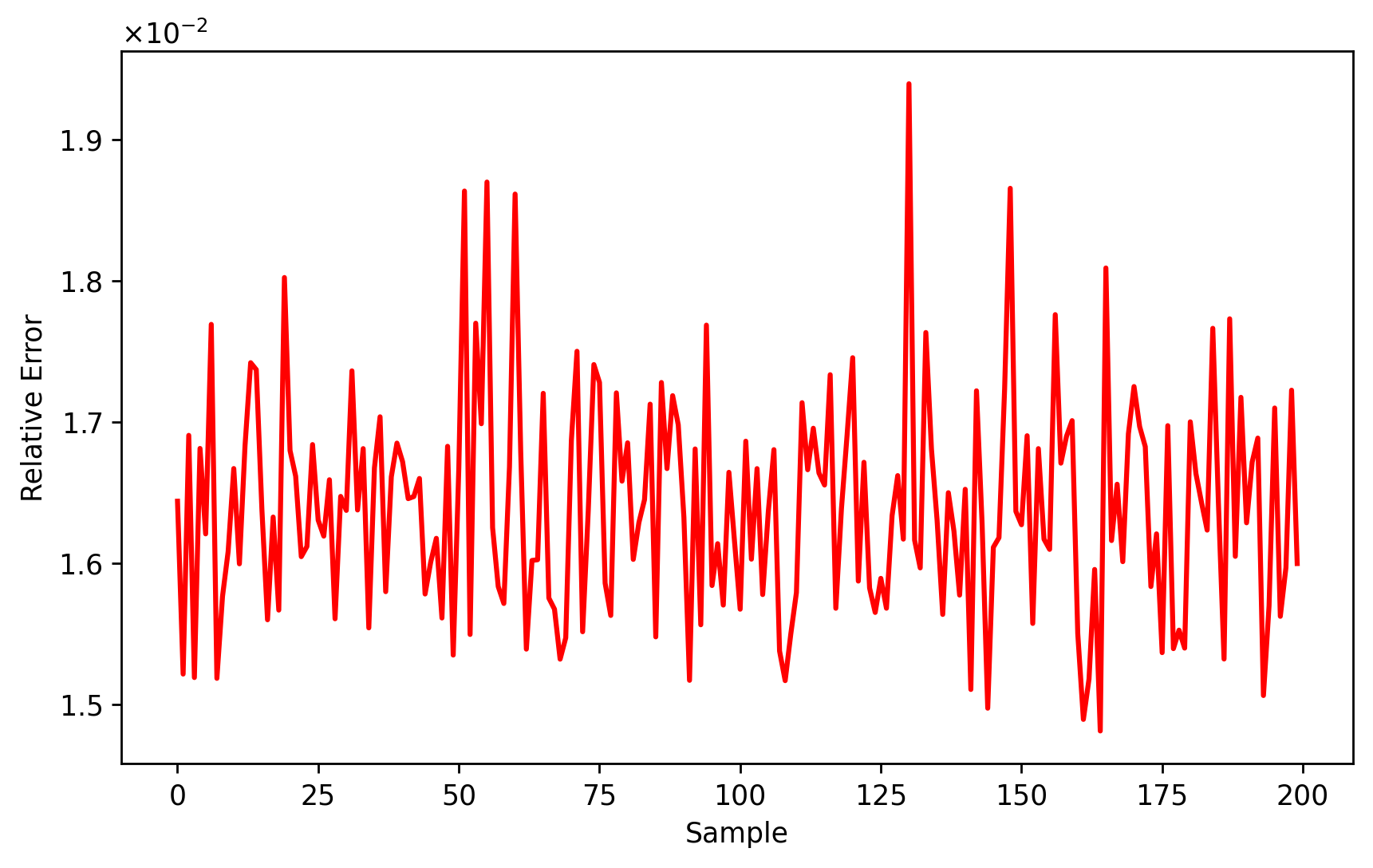}
            \caption{}
            \label{rel2rat}
        \end{subfigure}

        \vspace{0.6em}

        \begin{subfigure}[b]{\textwidth}
            \centering
            \includegraphics[width=\textwidth]{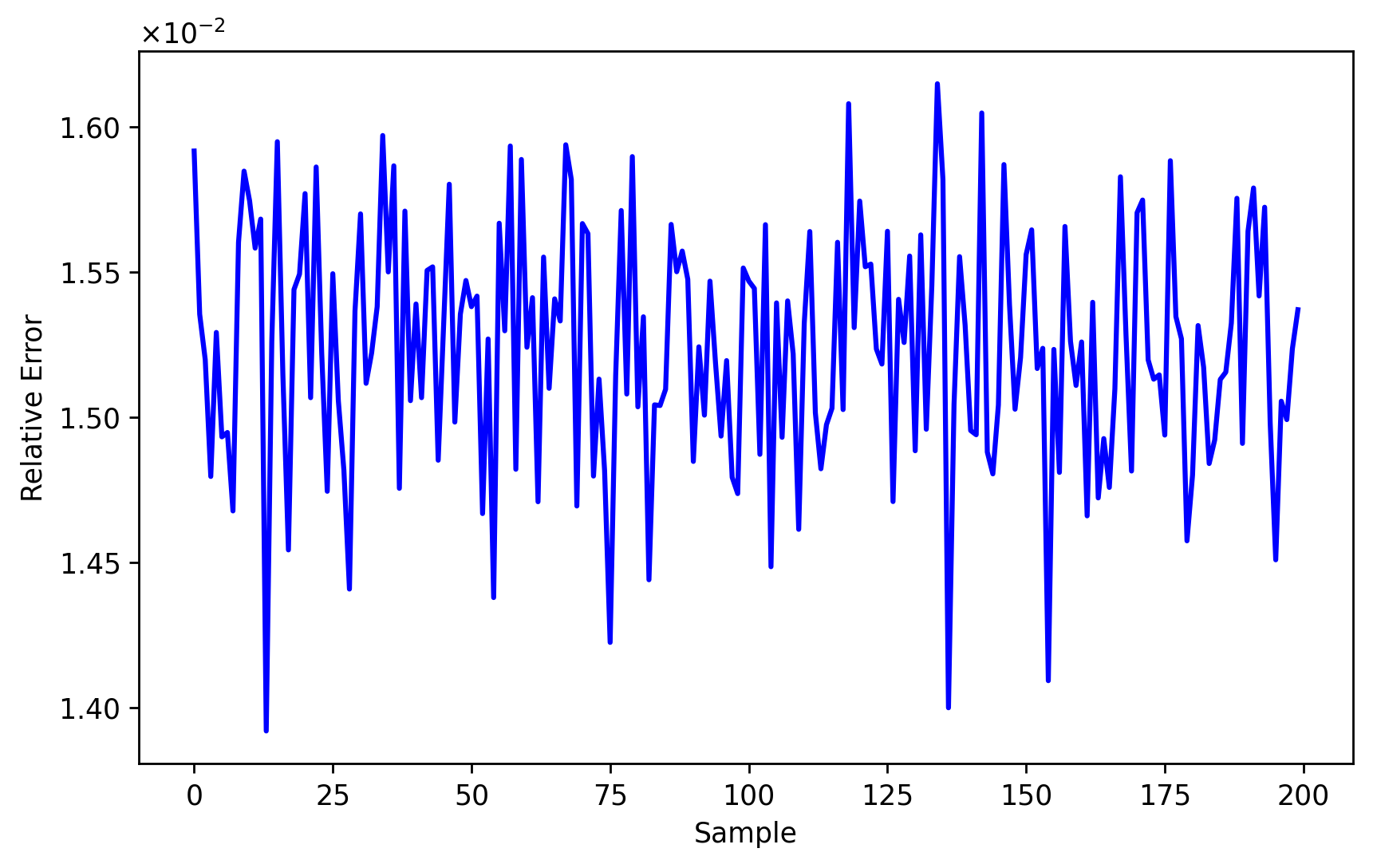}
            \caption{}
            \label{rel2irr}
        \end{subfigure}

    \end{minipage}
    \hfill
    \begin{minipage}[c]{0.50\textwidth}
        \centering

        \begin{subfigure}[b]{\textwidth}
            \centering
            \includegraphics[width=\textwidth]{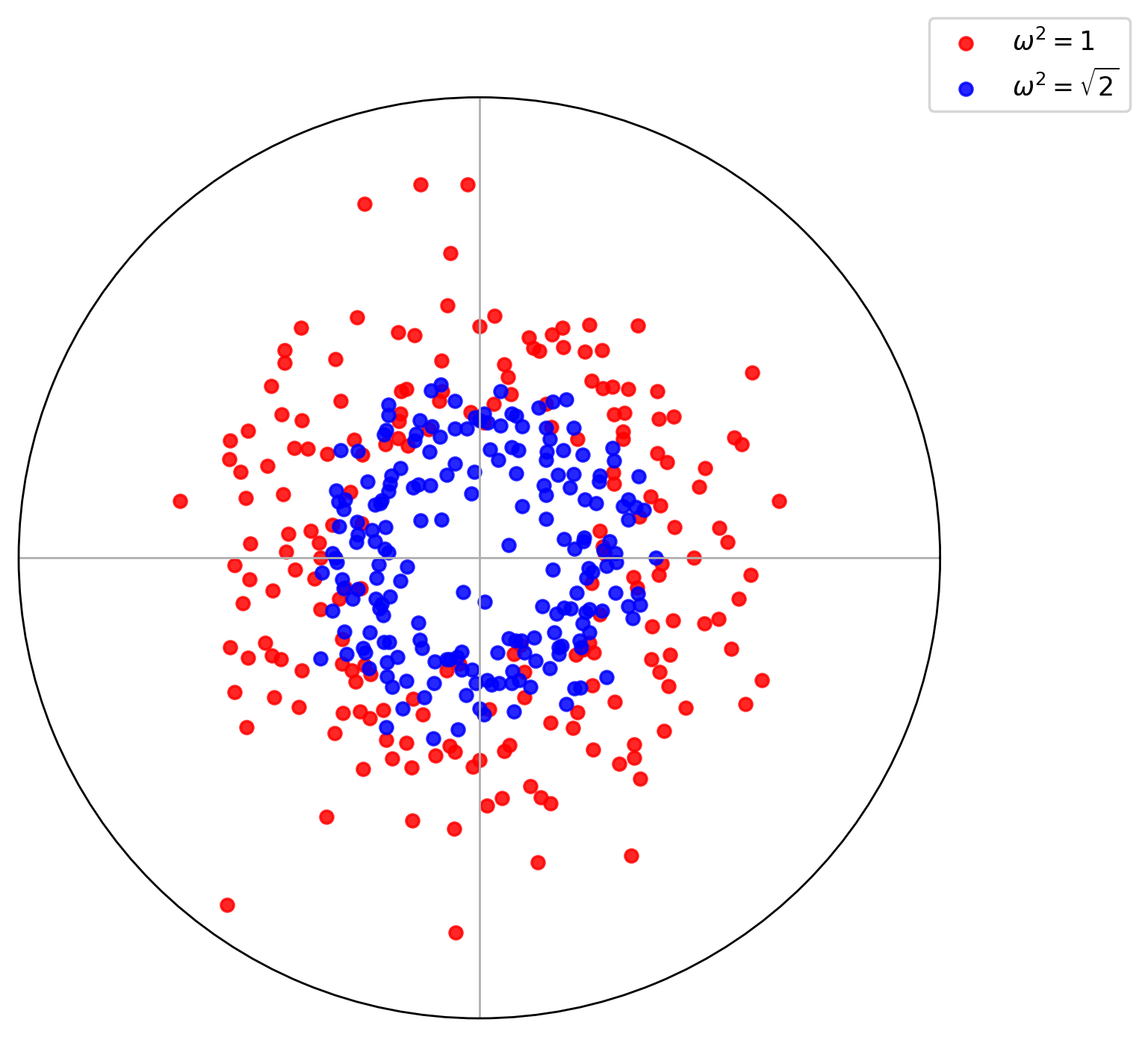}
            \caption{}
            \label{rel2polar}
        \end{subfigure}

    \end{minipage}

    \caption{Network testing. 
    (a) Relative \(L_2\) error of the solution obtained on the rational tori in the test set; 
    (b) Relative \(L_2\) error of the solution obtained on the irrational tori in the test set; 
    (c) Polar representation of the test-set relative \(L_2\) errors.}
    \label{trainrnn}
\end{figure}
\begin{figure}[H]
    \centering

    \begin{subfigure}[b]{0.48\textwidth}
        \centering
        \includegraphics[width=\textwidth]{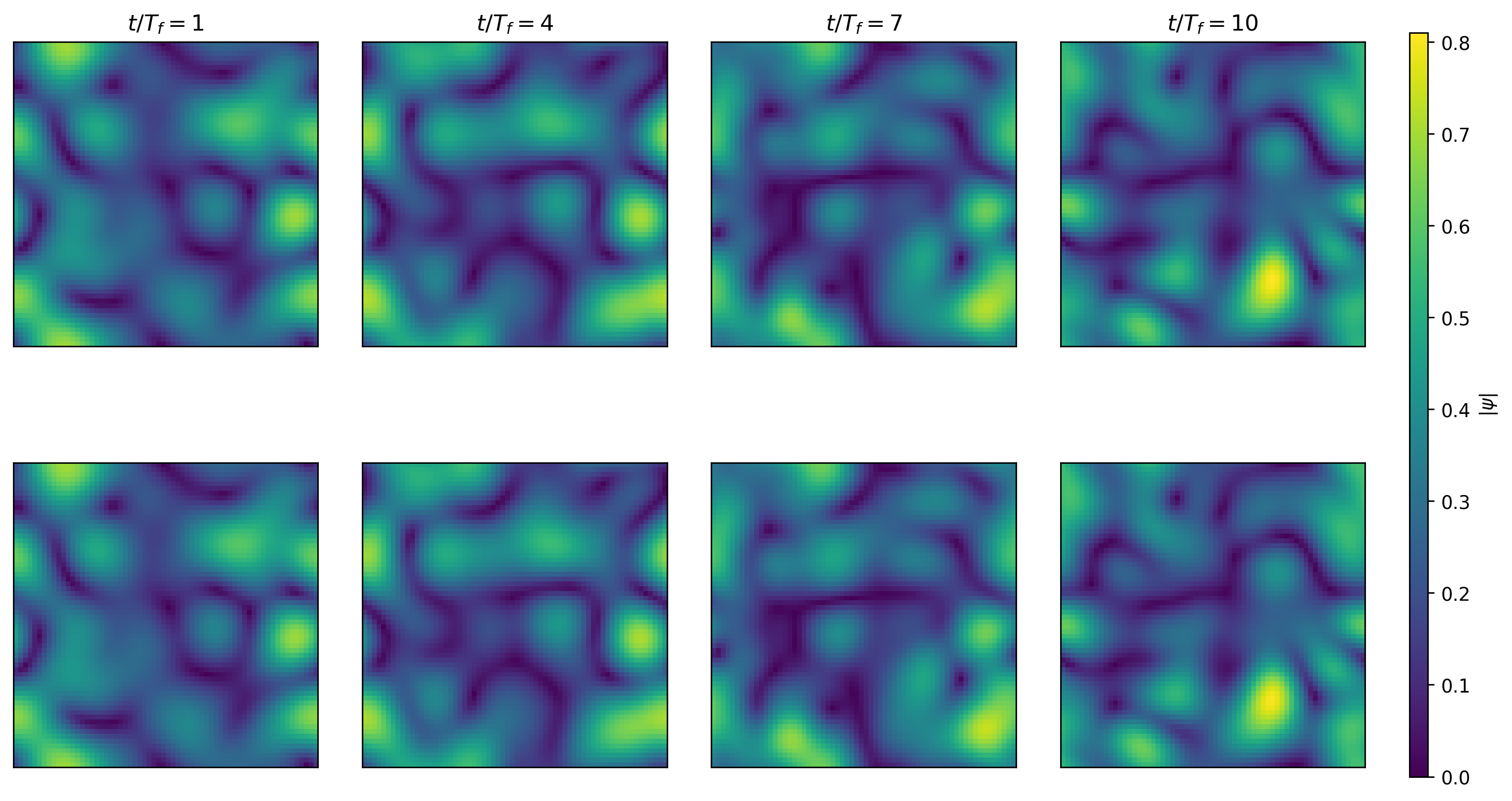}
        \caption{}
        \label{snapshotrat}
    \end{subfigure}
    \hfill
    \begin{subfigure}[b]{0.48\textwidth}
        \centering
        \includegraphics[width=\textwidth]{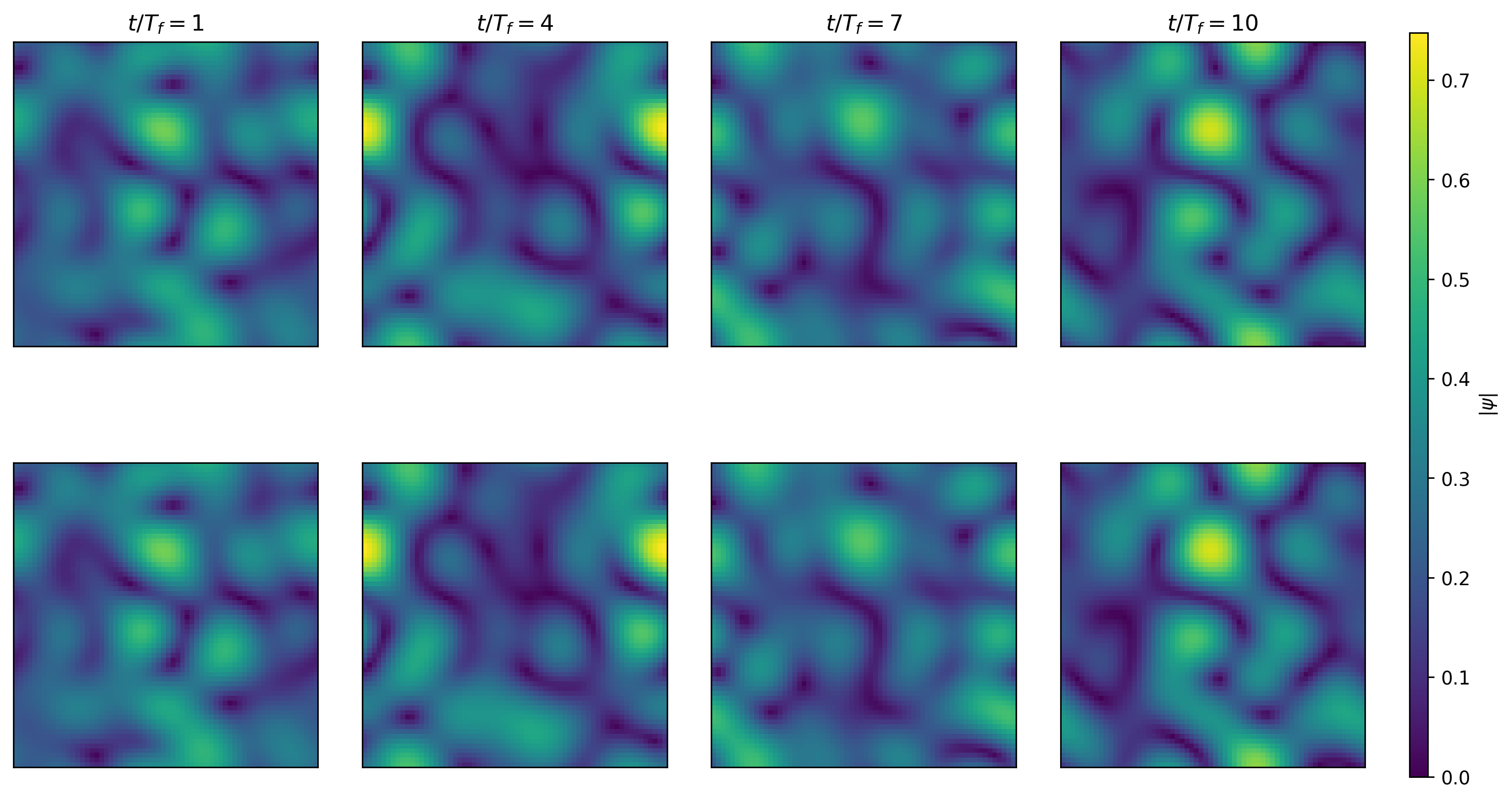}
        \caption{}
        \label{snapshotirr}
    \end{subfigure}   

    \caption{Snapshots of $|\psi|$ on rational and irrational tori. For both cases (a) and (b), the top row shows the ground truth solution and the bottom row shows the prediction at $t/T_f=1,4,7,10$.}
    \label{snapshotsol}
\end{figure}

In Figure~\ref{epochlossrnn}, the training and validation errors decrease steadily over the epochs, with the validation curve closely following the training curve. This suggests that the model learns the one-step solution operator without a clear sign of overfitting. The sample-wise relative \(L^2\) errors in Figures~\ref{trainrnn}(a) and~\ref{trainrnn}(b) remain on the order of \(10^{-2}\) across the test set for both the rational and irrational tori, showing that the learned operator gives consistent accuracy over unseen initial conditions. Figure~\ref{trainrnn}(c) gives a compact polar representation of test-set errors. For each test sample \(j=1,\ldots,200\), we plot $ z_j = r_j e^{i\vartheta_j},$ where the radius $r_j$ is defined as the relative $L_2$ error, and the angle $\vartheta={2\pi(j-1)}/200$ indexes the test samples uniformly around the circle. Thus, points closer to the center correspond to smaller errors, while points farther from the center correspond to larger errors. The rational case shows a wider radial spread, whereas the irrational case is more concentrated toward smaller errors.

\begin{figure}[H]
    \centering
    
    \begin{subfigure}[b]{0.48\textwidth}
        \centering
        \includegraphics[width=\textwidth]{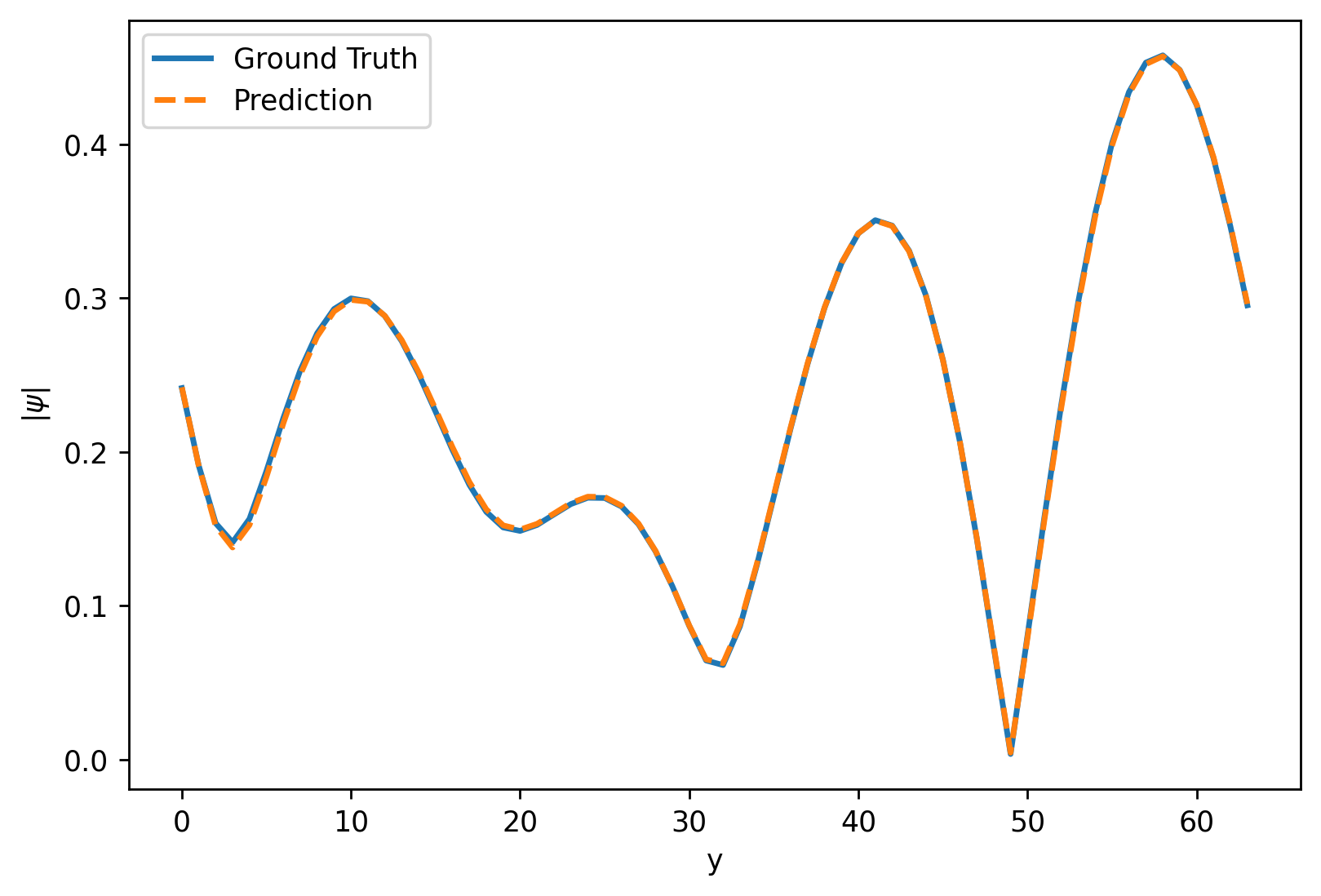}
        \caption{}
        \label{slicerat}
    \end{subfigure}
    \hfill 
    \begin{subfigure}[b]{0.48\textwidth}
        \centering
        \includegraphics[width=\textwidth]{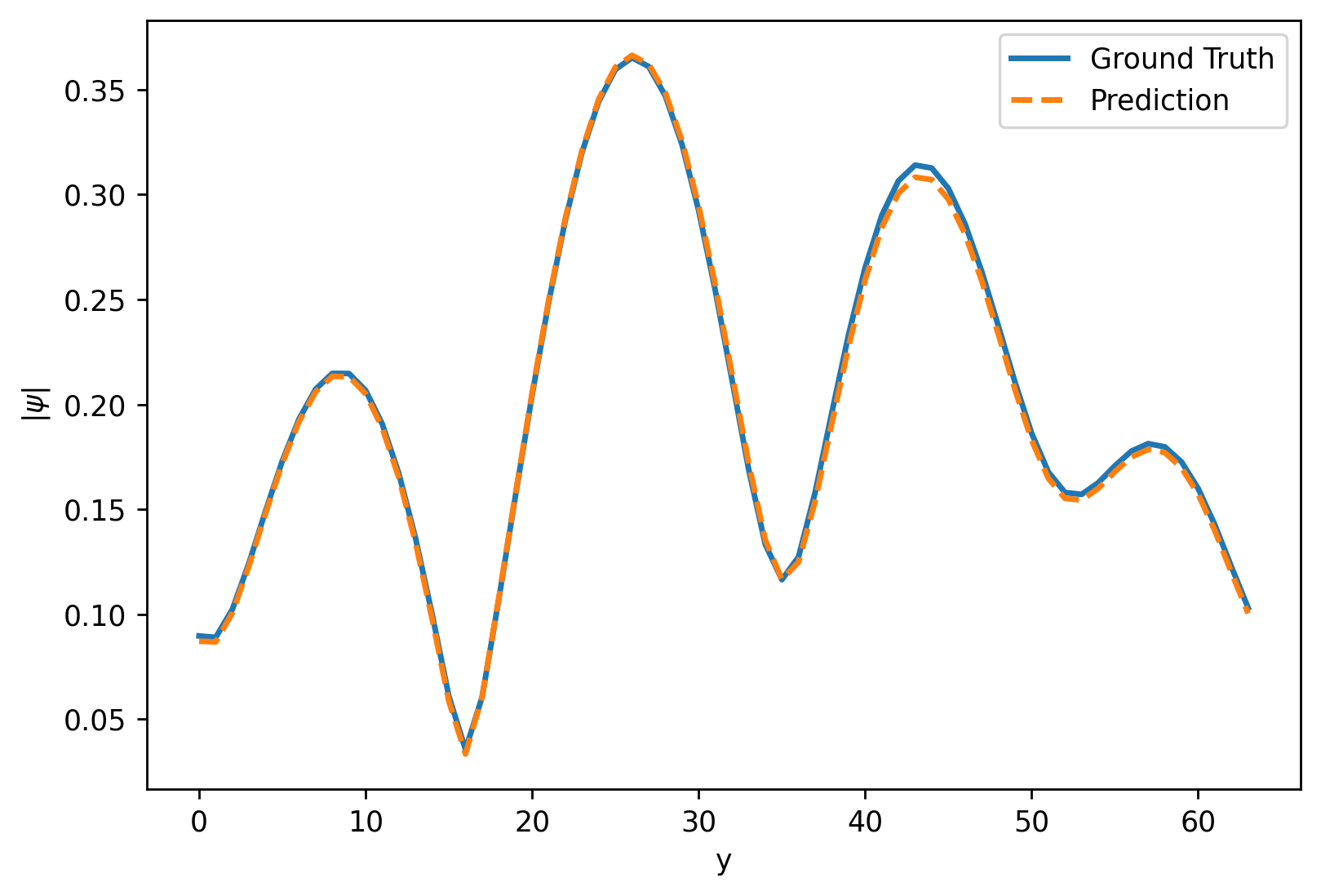}
        \caption{}
        \label{sliceirr}
    \end{subfigure}

   \caption{Comparison chart of the ground truth and predicted solutions for $|\psi|$ at $t/T_f=2$ on rational in (a) and irrational tori in (b).}
    \label{slicesol}
\end{figure}

Figure~\ref{snapshotsol} shows that the predicted solution agrees closely with the ground truth on both the rational and irrational tori. The learned solution operator captures the main spatial patterns and their evolution without noticeable distortion in the plotted time window. In Figure~\ref{slicesol}, we plot one-dimensional slice profiles of $|\psi|$ at $t/T_f=2$ against the grid index on the $64\times64$ data grid, which further confirms the agreement, since the predicted and ground truth curves are nearly indistinguishable for both geometries.

\begin{figure}[H]
    \centering
    \includegraphics[width=0.5\linewidth]{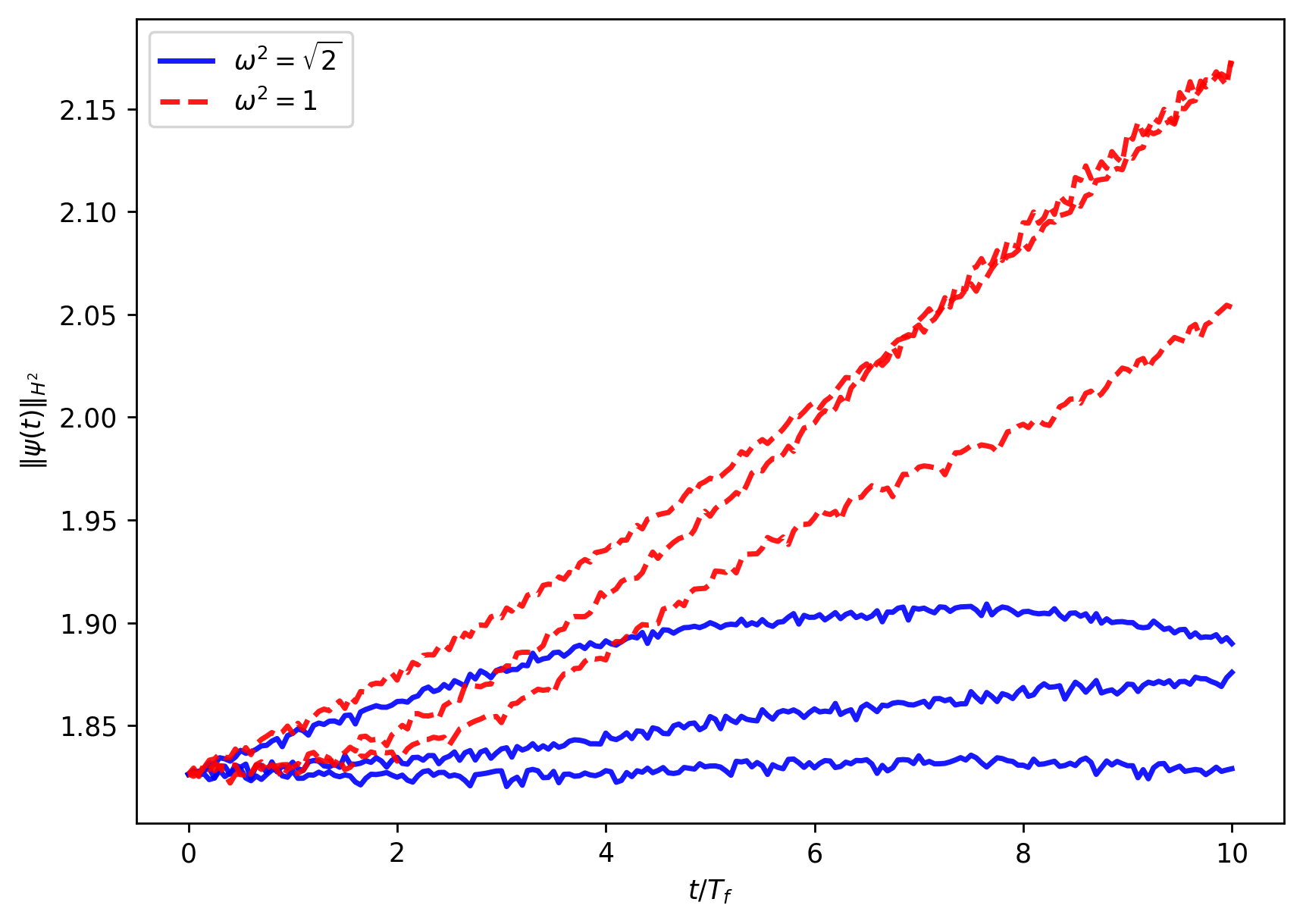}
    \caption{The growth of $\|\psi\|_{H^2}$ in the rational and irrational tori.}
    \label{Hscomparison}
\end{figure}
In Figure~\ref{Hscomparison}, we plot the Sobolev norm growth computed from the learned operator for three test realizations from each torus geometry. The rational case \((\omega^2=1)\), shown with dashed curves, exhibits stronger growth in \(\|\psi(t)\|_{H^2}\) over the time interval, while the irrational case \((\omega^2=\sqrt{2})\), shown with solid curves, shows more constrained growth and remains lower overall. This behavior is consistent with the theoretical discussion in Section~\ref{sec: theory}, where the irrational geometry restricts the resonance structure and leads to weaker transfer of energy to higher Fourier modes. In particular, the numerical result is consistent with Theorem~\ref{thm1.1} and Corollary~\ref{cor1.2}, which indicate that the growth of the Sobolev norm is more constrained on sufficiently irrational tori. Thus, the model captures the geometry-dependent spectral behavior predicted by the analysis.

\subsection{Ablation Study of Critical Factors in the FNO Scheme}
To assess the sensitivity of our operator to key architectural choices, we perform ablation studies on the number of retained Fourier modes, the nonlinear activation function, the Fourier-layer depth, and the use of explicit geometry conditioning. 

For a test sample indexed by $i$, let $\psi_{i}^n$ denote the ground truth solution at the time level $n$, and let $\psi_{i,\theta}^n$ denote the corresponding prediction. The relative $L^2$ error is defined by
\[
\mathcal{E}_i
=
\left(
\frac{
\displaystyle\sum_{n=1}^{M}\sum_{j,k}
\left|
\psi_{i,\theta}^n(x_j,y_k)
-
\psi_{i}^n(x_j,y_k)
\right|^2
}{
\displaystyle\sum_{n=1}^{M}\sum_{j,k}
\left|
\psi_{i}^n(x_j,y_k)
\right|^2
}
\right)^{1/2}.
\]
For each ablation setting, we compute $\mathcal E_i$ over all test realizations and report
\[
\mathrm{Worst}=\max_{1\le i\le N} \mathcal E_i,
\quad
\mathrm{Mean}=\frac{1}{N}\sum_{i=1}^{N} \mathcal E_i,
\quad
\mathrm{Best}=\min_{1\le i\le N} \mathcal E_i,
\]
where $N$ is the total number of test samples. These values summarize the least accurate, average, and most accurate test predictions over the full prediction interval for each model configuration.

\subsubsection{The influence of the number of Fourier modes}
In this section, we investigate on how the number of retained Fourier modes affects the accuracy of the learned model. In the FNO architecture, the spectral convolution is computed only on a truncated set of low Fourier modes, while the remaining high-frequency modes are not directly parameterized in Fourier space. Thus, the value of $K$ controls how much spectral information is retained in each Fourier layer. In this experiment, we keep the network width, number of Fourier layers, activation function, training data, and optimizer fixed, and vary only the number of retained Fourier modes.

\begin{table}[H]
\centering
\caption{The relative $L_2$ error corresponding to different Fourier modes, $K$.}
\label{tab:modes_geometry_error}
\begin{tabular*}{0.85\textwidth}{@{\extracolsep{\fill}} c c c c c}
\toprule
\multirow{2}{*}{Fourier Modes $(K)$} 
& \multirow{2}{*}{Geometry $(\omega^2)$} 
& \multicolumn{3}{c}{Error} \\
\cmidrule(lr){3-5}
& & Worst & Mean & Best \\
\midrule
\multirow{2}{*}{8}
&  $\sqrt{2}$ & 0.0551 & 0.0392 & 0.0254 \\
&  $1$        & 0.1091 & 0.0848 & 0.0610 \\
\midrule
\multirow{2}{*}{12}
&  $\sqrt{2}$ & 0.0558 & 0.0366 & 0.0251 \\
&  $1$        & 0.0954 & 0.0625 & 0.0267 \\
\midrule
\multirow{2}{*}{16}
&  $\sqrt{2}$ & 0.1074 & 0.0870 & 0.0748 \\
&  $1$        & 0.1075 & 0.0883 & 0.0689 \\
\midrule
\multirow{2}{*}{20}
&  $\sqrt{2}$ & 0.0897 & 0.0676 & 0.0473 \\
&  $1$        & 0.1239 & 0.0993 & 0.0647 \\
\bottomrule
\end{tabular*}
\end{table}
Table~\ref{tab:modes_geometry_error} shows that increasing the number of retained Fourier modes does not lead to a monotone improvement in accuracy. The best mean error is obtained with $K=12$ for both geometries, with mean errors $0.0366$ for the irrational torus and $0.0625$ for the rational torus. The case $K=8$ is competitive for the irrational geometry, but gives a noticeably larger error for the rational case. Increasing to $K=16$ worsens the performance for both geometries, while $K=20$ improves over $K=16$ but still does not outperform $K=12$. This suggests that retaining too few modes may limit the model's ability to represent the solution dynamics, while retaining too many modes can increase the number of trainable spectral parameters and make optimization less effective for the fixed training setup. Altogether, $K=12$ gives the best balance between spectral resolution and stable learning in this experiment.

\subsubsection{The influence of nonlinear activation functions}

We examine the effect of the nonlinear activation function, with the retained Fourier modes fixed at $K=12$. The nonlinear activation function affects how the network combines local and spectral features across the network layers. It allows the model to represent nonlinear solution behavior beyond a purely linear transformation of the input modes. Its derivative also influences gradient propagation during training, which can affect both convergence and long-time stability. For this reason, the choice of activation function can have a noticeable effect on the accuracy of the learned solution operator. Here, we compare Sigmoid, GELU, Tanh, Swish, and ReLU activations using the same training and testing setup.

\begin{table}[H]
\centering
\caption{The relative $L_2$ error corresponding to different nonlinear activation functions.}
\label{tab:activation_geometry_error}
\begin{tabular*}{0.85\textwidth}{@{\extracolsep{\fill}} l c c c c}
\toprule
\multirow{2}{*}{Activation Function} 
& \multirow{2}{*}{Geometry $(\omega^2)$} 
& \multicolumn{3}{c}{Error} \\
\cmidrule(lr){3-5}
& & Worst & Mean & Best \\
\midrule
\multirow{2}{*}{Sigmoid$(x)$}
&  $\sqrt{2}$ & 0.0302 & 0.0225 & 0.0171 \\
&  $1$        & 0.0446 & 0.0294 & 0.0213 \\
\midrule
\multirow{2}{*}{GELU$(x)$}
&  $\sqrt{2}$ & 0.0558 & 0.0366 & 0.0251 \\
&  $1$        & 0.0954 & 0.0625 & 0.0267 \\
\midrule
\multirow{2}{*}{Tanh$(x)$}
&  $\sqrt{2}$ & 0.0643 & 0.0486 & 0.0380 \\
&  $1$        & 0.1110 & 0.0665 & 0.0406 \\
\midrule
\multirow{2}{*}{Swish$(x)$}
&  $\sqrt{2}$ & 0.1334 & 0.1102 & 0.0946 \\
&  $1$        & 0.1469 & 0.0931 & 0.0632 \\
\midrule
\multirow{2}{*}{ReLU$(x)$}
&  $\sqrt{2}$ & 0.1342 & 0.0899 & 0.0597 \\
&  $1$        & 0.2712 & 0.2153 & 0.1617 \\
\bottomrule
\end{tabular*}
\end{table}

From Table~\ref{tab:activation_geometry_error}, we observe that Sigmoid gives the smallest mean error for both geometries, with mean errors of $0.0225$ for $\omega^2=\sqrt{2}$ and $0.0294$ for $\omega^2=1$. GELU gives the next best performance, with mean errors of $0.0366$ and $0.0625$ for the irrational and rational geometries, respectively. Tanh performs slightly worse than GELU but remains more accurate than Swish and ReLU. ReLU is especially less accurate on the rational torus, where the mean error increases to $0.2153$. Taken together, Sigmoid gives the most accurate activation choice in this experiment.

\subsubsection{The influence of Fourier-layer depth}

The Fourier layers are the main spectral components of the FNO architecture. They map the input representation into Fourier space, apply trainable weights to a fixed number of retained modes, and return the result to physical space. Their depth therefore determines how many times the network processes and recombines spectral information before producing the next predicted state. To examine how this affects the model performance, we vary the number of Fourier layers while keeping the Fourier modes, activation function, training data, and optimization settings fixed. The depths considered are $2$, $4$, $6$, $8$, and $10$.

\begin{table}[H]
\centering
\caption{The relative $L_2$ error corresponding to different Fourier-layer depths.}
\label{tab:depth_geometry_error}
\begin{tabular*}{0.85\textwidth}{@{\extracolsep{\fill}} c c c c c}
\toprule
\multirow{2}{*}{Fourier Layers} 
& \multirow{2}{*}{Geometry $(\omega^2)$} 
& \multicolumn{3}{c}{Error} \\
\cmidrule(lr){3-5}
& & Worst & Mean & Best \\
\midrule
\multirow{2}{*}{2}
&  $\sqrt{2}$ & 0.0173 & 0.0118 & 0.0082 \\
&  $1$        & 0.0541 & 0.0372 & 0.0261 \\
\midrule
\multirow{2}{*}{4}
&  $\sqrt{2}$ & 0.0558 & 0.0366 & 0.0251 \\
&  $1$        & 0.0954 & 0.0625 & 0.0267 \\
\midrule
\multirow{2}{*}{6}
&  $\sqrt{2}$ & 0.1217 & 0.0820 & 0.0423 \\
&  $1$        & 0.1382 & 0.0865 & 0.0450 \\
\midrule
\multirow{2}{*}{8}
&  $\sqrt{2}$ & 0.0780 & 0.0617 & 0.0481 \\
&  $1$        & 0.1181 & 0.0740 & 0.0548 \\
\midrule
\multirow{2}{*}{10}
&  $\sqrt{2}$ & 0.0900 & 0.0717 & 0.0536 \\
&  $1$        & 0.1226 & 0.0808 & 0.0507 \\
\bottomrule
\end{tabular*}
\end{table}

The prediction accuracy for different Fourier-layer depths is shown in Table~\ref{tab:depth_geometry_error}. The results show that Fourier-layer depth has a clear effect on performance, but the trend is not monotone. The two-layer model gives the smallest mean error for both geometries, with mean errors of $0.0118$ for $\omega^2=\sqrt{2}$ and $0.0372$ for $\omega^2=1$. The four-layer model gives the next best performance, with mean errors of $0.0366$ and $0.0625$ for the irrational and rational geometries, respectively. Increasing the depth further does not improve the accuracy. In particular, the six-layer model gives the largest mean error for both geometries, while the eight- and ten-layer models improve upon the six-layer case but still do not outperform the two- or four-layer architectures. Thus, although additional Fourier layers increase the spectral processing capacity of the network, deeper architectures do not necessarily yield better long-time predictions under the present training setup. These results indicate that Fourier-layer depth must be selected carefully, since excessive depth may increase optimization difficulty and error accumulation.

\subsubsection{The influence of geometry conditioning}

In this section, we compare the geometry-conditioned model
\[
    \mathcal J_{\theta}
    :
    \left(
    \re \psi^n(x,y),
    \im \psi^n(x,y),
    \omega^2
    \right)
    \longmapsto
    \left(
    \re\psi^{n+1}(x,y),
    \im\psi^{n+1}(x,y)
    \right),
\]
with an unconditioned model
\[
    \mathcal J_{\theta}
    :
    \left(
    \re \psi^n(x,y),
    \im\psi^n(x,y)
    \right)
    \longmapsto
    \left(
    \re \psi^{n+1}(x,y),
    \im \psi^{n+1}(x,y)
    \right).
\]
\begin{table}[H]
\centering
\caption{The relative $L_2$ error for geometry-conditioned and unconditioned models.}
\label{tab:geometry_conditioning_error}
\begin{tabular*}{0.85\textwidth}{@{\extracolsep{\fill}} l c c c c}
\toprule
\multirow{2}{*}{Model} 
& \multirow{2}{*}{Geometry $(\omega^2)$} 
& \multicolumn{3}{c}{Error} \\
\cmidrule(lr){3-5}
& & Worst & Mean & Best \\
\midrule
\multirow{2}{*}{GC-FNO}
&  $\sqrt{2}$ & 0.0558 & 0.0366 & 0.0251 \\
&  $1$        & 0.0954 & 0.0625 & 0.0267 \\
\midrule
\multirow{2}{*}{FNO}
&  $\sqrt{2}$ & 0.0618 & 0.0467 & 0.0288 \\
&  $1$        & 0.1358 & 0.1034 & 0.0676 \\
\bottomrule
\end{tabular*}
\end{table}
We investigate whether the learned solution operator can infer the underlying geometry from solution snapshots alone, or whether providing the torus parameter $\omega^2$ improves the accuracy of the model. In the conditioned case, the input includes an additional constant channel containing the value of $\omega^2$, so that a single network can explicitly distinguish between the rational and irrational geometries. In the unconditioned case, this channel is removed, and the network only receives the real and imaginary parts of the solution. 

\begin{figure}
    \centering
    \begin{subfigure}{0.48\textwidth}
        \centering
        \includegraphics[width=\textwidth]{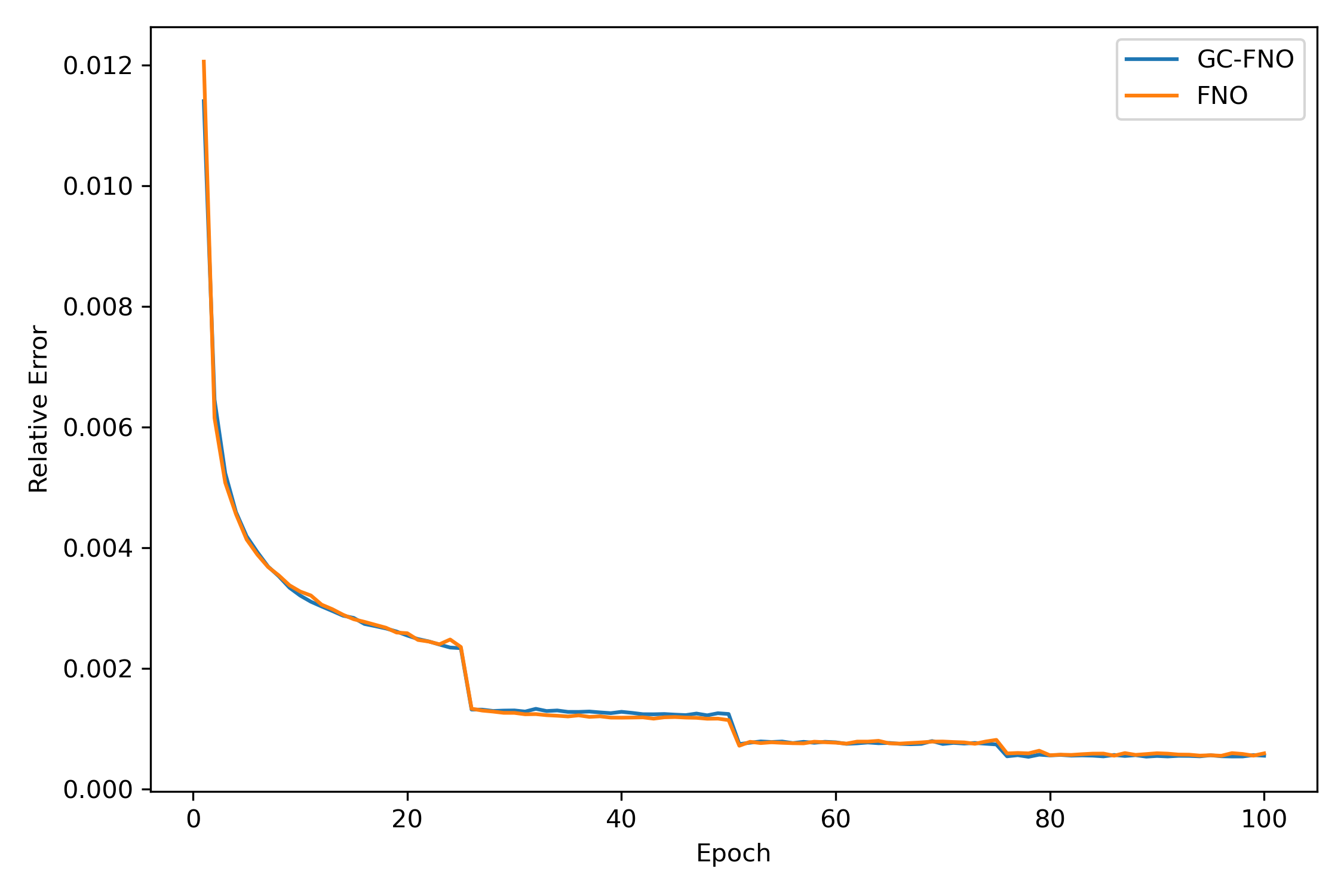}
        \caption{Training relative error.}
        \label{fig:train_conditioned_unconditioned}
    \end{subfigure}
    \hfill
    \begin{subfigure}{0.48\textwidth}
        \centering
        \includegraphics[width=\textwidth]{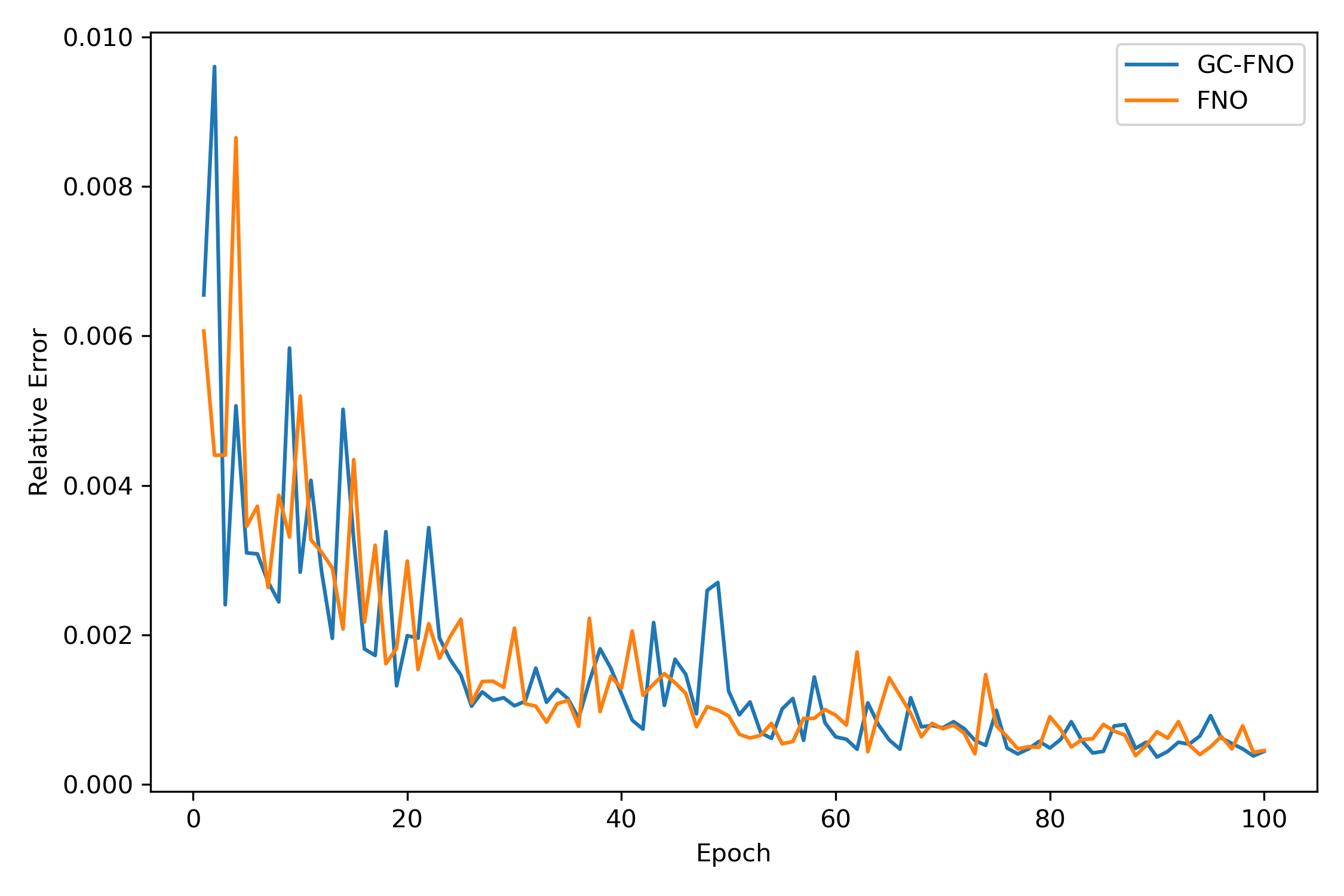}
        \caption{Validation relative error.}
        \label{fig:val_conditioned_unconditioned}
    \end{subfigure}
    \caption{Training and validation relative errors for the geometry-conditioned and unconditioned models.}
    \label{fig:conditioned_unconditioned_training}
\end{figure}

Figures~\ref{fig:train_conditioned_unconditioned}--\ref{fig:val_conditioned_unconditioned} show that both the conditioned and unconditioned models train stably. The training curves are nearly indistinguishable, whereas the validation curves are noisier but decay to similar magnitudes by the end of training. However, the errors in Table~\ref{tab:geometry_conditioning_error} show a clearer benefit from geometry conditioning. For the irrational torus, the conditioned model reduces the mean error from $0.0467$ to $0.0366$. For the rational torus, the improvement is larger, with the mean error decreasing from $0.1034$ to $0.0625$. The worst and best errors also decrease under conditioning for both geometries. Thus, adding the $\omega^2$ channel helps the model distinguish the two geometric formulations and gives a more accurate learned solution operator. The improvement is especially important for the rational torus, where the unconditioned model has noticeably larger error.

\section{Conclusion}
In this work, we investigated the use of Fourier neural operators for learning the geometry-dependent dynamics of the two-dimensional cubic defocusing nonlinear Schrödinger equation on rational and irrational tori. The main motivation was that the torus aspect ratio parameter \(\omega^2\) changes the dispersion relation $ \lambda_{m,\ell}=m^2+\omega^2\ell^2,$ and therefore changes the resonance and quasi-resonance structure of the Fourier lattice. As a result, the rational and irrational geometries can exhibit different rates of energy transfer to high Fourier modes. Our numerical experiments show that the learned operator captures the solution dynamics on unseen random-phase initial data and reproduces the different Sobolev-norm behavior observed on the two geometries. The ablation studies further clarify how architectural choices affect the
accuracy of the learned dynamics. The scope of the present study is shaped by the fact that the model is trained entirely on data generated by a numerical solver, so the quality of the learned operator depends on the accuracy, resolution, and time horizon of the numerical data. In addition, the experiments are performed on a fixed learning grid and over a finite prediction interval, so resolution transfer and substantially longer-time generalization are left for future work. The computational experiments were also limited by the available GPU resources and wall-time limits on the computing cluster, which affected the number of training runs, ablation configurations, grid resolutions, and long-time prediction experiments that could be included.

The results also leave open the development of resonance-aware neural operators. In the present FNO architecture, the retained Fourier modes are learned through generic spectral weights. A resonance-aware architecture would instead use the dispersion relation of the underlying PDE to identify resonant or nearly resonant Fourier-mode interactions and incorporate this information into the model architecture. By aligning the architecture more closely with the mechanism that drives spectral energy transfer, such models may improve predictive accuracy and long-time stability. The present study also leaves the more realistic waveguide setting \(\mathbb{R}\times\mathbb{T}^2\) for future investigation. The cubic NLS on \(\mathbb{R}\times\mathbb{T}^d\) has been mathematically investigated through modified scattering \cite{WilsonYu2022, HaniPausaderTzvetkovVisciglia2015}, resonant dynamics, and Sobolev-norm growth \cite{planchon2017growth}, but the corresponding geometry-aware neural-operator problem appears to remain largely unexplored.

\section*{Declarations}




\bibliographystyle{abbrv}
\bibliography{referencemain}

\end{document}